\newtheorem{theorem}{Theorem}
\newtheorem{lemma}[theorem]{Lemma}
\newtheorem{corollary}[theorem]{Corollary}
\newtheorem{proposition}[theorem]{Proposition}
\newtheorem{definition}{Definition}
\newcommand{\model}{\textit{Model}}
\newcommand{\vc}{\textit{VC}}
\newcommand{\algname}{\textsc{Woodelf}\xspace} 
\newcommand{\shap}{\textsc{shap}\xspace}
\newcommand{\bo}{\beta^{\textit{original}}_i}
\newcommand{\bs}{\beta^{\textit{simplified}}_i}
\newcommand{\so}{\phi^{\textit{original}}_i}
\newcommand{\shapsimp}{\phi^{\textit{simplified}}_i}
\title{From Decision Trees to Boolean Logic: A Fast and Unified SHAP Algorithm}
\author {
    Alexander Nadel\textsuperscript{\rm 1},
    Ron Wettenstein\textsuperscript{\rm 2}
}
\date{April 2025}
\begin{document}

\maketitle

\begin{abstract}

SHapley Additive exPlanations (SHAP) is a key tool for interpreting decision tree ensembles by assigning contribution values to features. It is widely used in finance, advertising, medicine, and other domains. Two main approaches to SHAP calculation exist: \textit{Path-Dependent SHAP}, which leverages the tree structure for efficiency, and \textit{Background SHAP}, which uses a background dataset to estimate feature distributions.

We introduce \algname, a SHAP algorithm that integrates decision trees, game theory, and Boolean logic into a unified framework. 
For each consumer, \algname constructs a pseudo-Boolean formula that captures their feature values, the structure of the decision tree ensemble, and the entire background dataset. It
then leverages this representation to compute Background SHAP in linear time. \algname can also compute Path-Dependent SHAP, Shapley interaction values, Banzhaf values, and Banzhaf interaction values.

\algname is designed to run efficiently on CPU and GPU hardware alike. Available via the \algname Python package, it is implemented using NumPy, SciPy, and CuPy without relying on custom C++ or CUDA code. This design enables fast performance and seamless integration into existing frameworks, supporting large-scale computation of SHAP and other game-theoretic values in practice.

For example, on a dataset with \num{3000000} rows, \num{5000000} background samples, and \num{127} features, \algname computed all Background Shapley values in \num{162} seconds on CPU and \num{16} seconds on GPU—compared to 44 minutes required by the best method on any hardware platform, representing 16$\times$ and 165$\times$ speedups, respectively.\\[2pt]

\textit{This is the full version of our paper to appear in AAAI-26.}

\end{abstract}

\begin{links}
    \link{The \algname Python Package}{https://github.com/ron-wettenstein/woodelf}
    \link{Experiment Notebooks}{https://github.com/ron-wettenstein/WoodelfExperiments}
    \link{IEEE-CIS Dataset}{https://www.kaggle.com/c/ieee-fraud-detection}
    \link{KDD-Cup Dataset}{https://kdd.ics.uci.edu/databases/kddcup99/kddcup99.html}
\end{links}

\section{Introduction}

Decision trees are widely used predictive models for classification and regression. 
To improve predictive accuracy, ensemble methods such as XGBoost~\cite{xgboost_paper}, Random Forest~\cite{random_forrest}, 
and CatBoost~\cite{catboost}, train multiple decision trees and average their predictions.

Recent efforts focus on explaining models using feature attributions. Local attribution shows how each feature affects a single prediction, which is often crucial for regulatory compliance~\cite{adverse_action_ecoa, onGDPRregulation}. Global attribution evaluates which features matter most overall, often by combining many local attributions~\cite{understandingglobalfeaturecontributions}. This is essential for model comprehension and feature selection.

SHAP (SHapley Additive exPlanations)~\cite{explainable_ai_trees} is a widely preferred method for both local and global feature attribution~\cite{responsible_ml, ml_interpretability_book}. It assigns Shapley values to features, providing a unified~\cite{unified_approach_to_interpreting_models} and consistent~\cite{consistent_shap} approach grounded in game theory.

\subsection{Shapley Values}
\label{sec:shap_intro}
Originating from cooperative game theory, Shapley values offer a fair method for distributing profits among players based on their individual contributions. 
Players whose contributions are crucial to the group's success receive a larger share of the profit, while those with smaller contributions receive less. Players who negatively impact the group's performance may receive negative payments.

Shapley values constitute the unique solution satisfying four key properties: efficiency, null player, symmetry, and linearity~\cite{original_shapley_paper}. The Shapley value formula uses the game's characteristic function to evaluate the player's impact across all possible coalitions.

\begin{definition}[Characteristic Function]
\label{definition_characteristic_function} For a group of players $N$, the \emph{characteristic function} $M$ is a function of the form $M:2^{[N]}\to\mathbb{R}$ mapping any subset $S\subseteq N$ to the profit when only players in $S$ participate.
\end{definition}

The Shapley value formula for player $i$, shown below, considers all subsets excluding $i$ and compares the profit with and without the player. These contributions are then normalized by a factor that depends on the coalition size.

\begin{equation}
\label{eq:s}
\phi_i(M) = \sum\limits_{S \subseteq N\setminus \{i\}} \!\!\!\!\!\frac{|S|!(|N| - |S| - 1)!}{|N|!}(M(S\cup\{i\}) - M(S))
\end{equation}

A naive Shapley values calculation takes exponential time, as the formula considers all possible subsets of $N\setminus\{i\}$. 
However, efficient computation is possible for certain scenarios, including decision tree ensembles \cite{explainable_ai_trees}, certain Boolean circuits classes \cite{arenas2021tractabilityshapscorebasedexplanationsdeterministic}, and tuples in query answering \cite{shapley_on_queries}. For other scenarios like neural networks where the computation is \#P-Hard~\cite{tractability_shap_explanations, complexity_of_SHAP}, approximation methods exist \cite{shapley_on_ML}.

\subsection{Feature Importance Using Shapley Values}
\label{sec:feature_importance_using_shap}
A predictive model, like a decision tree ensemble, can be viewed as a ``game'', where feature values act as ``players'' and the model's prediction represents the ``game's profit''. In this context, Shapley values quantify how each feature (''player'') affected the prediction (the ''game's profit''). 

Our goal is to define the characteristic function of this “game” and compute its Shapley values. This requires specifying the model’s output when only a subset of features is present, while the rest are considered missing. Several definitions exist for handling missing features; see~\cite{many_missing_feature_definitions} for a survey. Of these, three are most commonly used today:

\begin{itemize}
    \item \emph{Baseline SHAP}: Assigns each feature a fixed baseline value used whenever the feature is missing.

    \item \emph{Path-Dependent SHAP}: Leverages the tree structure and the cover property (i.e., how many rows reached each node during training) to infer the effect of missing features.

    \item \emph{Background SHAP}: Replaces fixed baselines with a background dataset. When features are missing, their values are taken from this dataset, predictions are computed, and results are averaged. This method is the most accurate, see Appendix~\ref{sec:background_shap_is_the_most_accurate} for an example.
\end{itemize}

\cite{explainable_ai_trees} were the first to present a polynomial-time algorithm for the three SHAP approaches discussed above. Their method efficiently tracks the number of feature subsets that reach each node, avoiding the need to enumerate them explicitly. These algorithms are implemented in the widely used \shap Python package.

Since then, several works have improved these methods. FastTreeShap v2~\cite{fast_tree_shap} accelerates Path-Dependent SHAP by extracting key information from the decision tree in advance. PLTreeShap~\cite{linear_background_shap}, which preprocesses both the decision tree and the background dataset, reduces the complexity of Background SHAP from $O(mn)$—where $n$ is the number of consumers and $m$ is the background size—to $O(m + n)$.

GPU-based implementations include GPUTreeSHAP~\cite{GPUTreeShap}, which computes Shapley values for each path in parallel and uses optimized bin-packing techniques to distribute work across GPU warps, and FourierSHAP~\cite{shap_fourier_gpu}, which performs well on models with a small number of features.

\subsection{Paper Outline and Our Contribution}
Sects.~\ref{sec:ltsc} and~\ref{sec:more_than_shap} introduce a novel linear-time algorithm for computing Shapley values, Shapley interaction values, Banzhaf values, and Banzhaf interaction values~\cite{PB_shap_and_banzhaf} for formulas in Weighted Disjunctive Normal Form (WDNF)~\cite{DBLP:conf/dna/ZhangJ05}.
Our main contribution, \algname, is introduced in Sect.~\ref{sec:woodelf_alg}, with several preliminary steps presented in Sects.~\ref{sec:decision_pattern},~\ref{sec:baseline_shap},~\ref{sec:baseline_shap_impl},~\ref{sec:bg_eq} and~\ref{sec:pd_eq}.

\algname is a \textbf{unified}, \textbf{generic}, \textbf{GPU-friendly} and \textbf{efficient} approach for SHAP calculation:  
\begin{itemize}
    \item \textbf{Unified} across Path-Dependent and Background SHAP, demonstrating that a single algorithm can handle the two problems previously thought to require distinct approaches.

\item \textbf{Metric-Generic}: \algname can compute Shapley values, Shapley interaction values, Banzhaf values, Banzhaf interaction values, and any other value over the Path-Dependent or Background characteristic functions that satisfies the linearity property. It is the first algorithm that supports such a broad range of metrics.

\item \textbf{GPU-friendly} and \textbf{pure Python}: Unlike existing approaches, in \algname, all the major algorithmic steps can be expressed as standard vectorized operations, making the algorithm inherently Single Instruction, Multiple Data (SIMD)- and GPU-friendly. Implementation-wise \algname is written entirely in Python, with the bottleneck operations implemented in NumPy and SciPy. By using CuPy, these operations can seamlessly run on GPUs without any custom \textsc{cuda} code. In contrast, \shap and other state-of-the-art (SOTA) implementations rely heavily on custom C++ and \textsc{cuda}. \algname achieves high efficiency while maintaining a pure Python design, simplifying integration and extensibility.

\item \textbf{Efficient}: By utilizing vectorized operations alongside efficient algorithmics (Sect.~\ref{sec:speed_ups}), \algname significantly advances SOTA performance. In Sect.~\ref{sec:experimental_results}, we demonstrate \algname's effectiveness on two large industrial datasets, achieving 24$\times$ to 333$\times$ speed-ups on GPU and 16$\times$ to 31$\times$ speed-ups on CPU, compared to the SOTA Background SHAP on any hardware platform.

\end{itemize}

\section{Linear-Time SHAP Calculation for WDNF}
\label{sec:ltsc}
Towards defining WDNF, we need additional notations. A \emph{literal} is a Boolean variable $x_i$ or its negation $\neg x_i$. A \emph{cube} is a conjunction (set) of literals.

\begin{figure*}[t]
\centering
\includegraphics[width=0.98\textwidth]{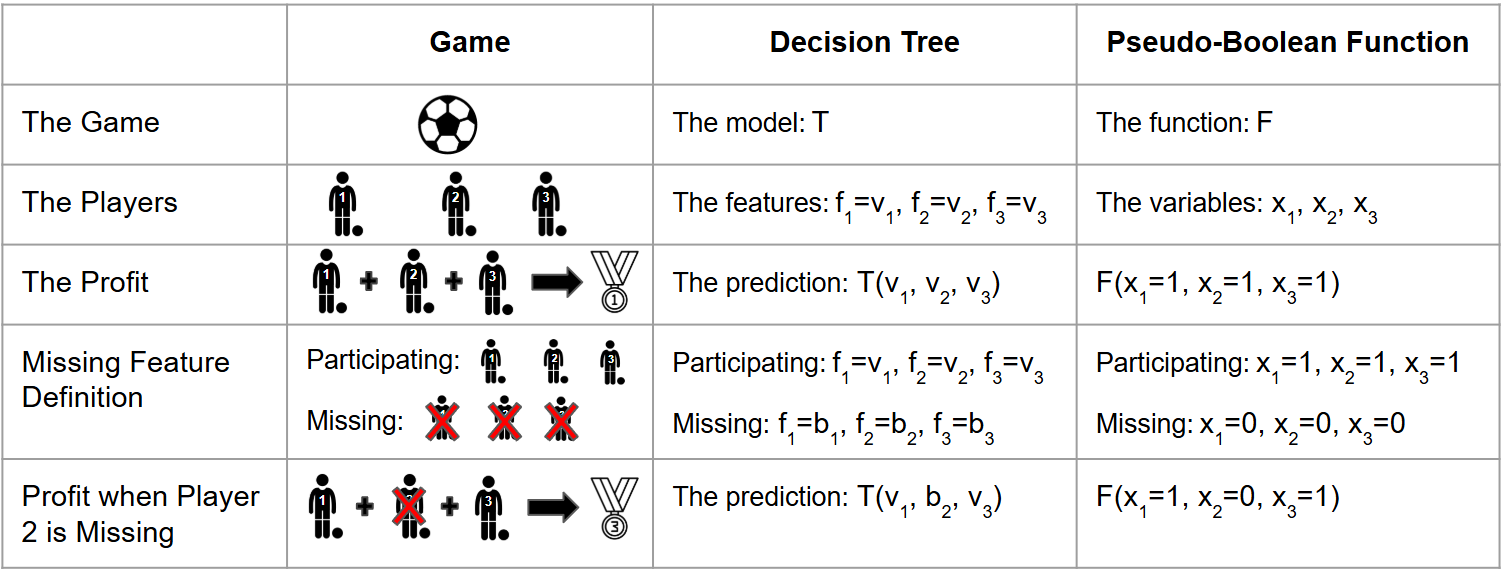} 
\caption{
An illustration how both PB functions and decision trees relate to well-established concepts in game theory. In decision trees, the model represents a game, features serve as players, and the prediction corresponds to profit. Under the baseline characteristic function definition, a missing player (e.g., player 2) is set to its baseline value ($b_2$) before making a prediction. In PB functions, the function itself represents a game, and the variables serve as players. Each variable is \texttt{True} when it participates and \texttt{False} when it is missing.}
\label{fig:CF_defs}
\end{figure*}

\begin{definition}[Pseudo-Boolean (PB) Function and Weighted Disjunctive Normal  Form (WDNF)] \label{def:WDNF}

A \emph{pseudo-Boolean (PB)} function is a function of the form  $F(x_1,\dots,x_h): \{0,1\}^h \to \mathbb{R}$. A \emph{Weighted Disjunctive Normal  Form (WDNF)} formula~\cite{DBLP:conf/dna/ZhangJ05} is a PB function expressed as:

\[
F(x_1, \dots,x_h) = \sum_{k=1}^m w_k \cdot c_k(x_1, \dots, x_h)
\]

Where each $c_k$ is a cube and $w_k \in \mathbb{R}$ is its weight. 
\end{definition}

For instance, the PB formula $F(x_1, x_2, x_3) = 3(\neg x_1) + 1(\neg x_1 \land x_2) + 5(x_1 \land \neg x_2 \land x_3)$
is in WDNF. Assigning $x_1=0$, $x_2=1$, $x_3=1$ results in $F(0,1,1) = 3 + 1 = 4$. 

For a cube $c_k$, we denote by $S_k$ the set of variables in $c_k$, partitioned into positive variables $S^+_k$ and negated variables $S^-_k$. For example, in the cube $c_k \equiv x_1 \land \lnot x_2 \land x_3$, we have $S^+_k = \{x_1, x_3\}$ and $S^-_k = \{x_2\}$.

A WDNF formula, and any other PB function, can be interpreted as a game where variables are players and the formula's output is the profit~\cite{PB_shap_and_banzhaf}. 
The characteristic function (recall Def.~\ref{definition_characteristic_function}) of this game is defined by Def.~\ref{def:PB_characteristic_function}.
To find the profit of a coalition $S$, we set $x_i = 1$ for all $i \in S$, $x_i = 0$ for all $i \notin S$, and evaluate the WDNF formula. See Fig.~\ref{fig:CF_defs} for an illustration.

\begin{definition}[PB function's Characteristic Function]
\label{def:PB_characteristic_function}
Given a PB function $F(V): \{0,1\}^h \to \mathbb{R}$ over $V = \{x_1, x_2, \ldots, x_h\}$, its characteristic function $\mathcal{M}_{F(V)}(S)$ is defined for any subset of variables $S \subseteq V$ as follows:

\[
\mathcal{M}_{F(V)}(S) = F \left( \mathbf{x} = \begin{cases} 
1 & \text{if } x_i \in S \\
0 & \text{if } x_i \notin S
\end{cases} \right)
\]

This means that for each $x_i \in S$, we set $x_i = 1$ and for each $x_i \notin S$, we set $x_i = 0$. The characteristic function then evaluates $F$ under this assignment.
\end{definition}

Having defined the characteristic function for a WDNF formula $F$, we can now compute its Shapley values via Formula~\ref{eq:s}. 
In general, computing Shapley values for pseudo-Boolean functions is \#P-Hard (see Appendix~\ref{PB_calc_complexity}). A key insight of this paper is that, for WDNF, Shapley values can be computed in linear time using the following formula:
\begin{equation}
\phi_i(F) = \sum\limits_{k=1}^m w_k \times 
\begin{cases}
\frac{1}{|S^+_k|\binom{|S_k|}{|S^+_k|}} & \text{if } i \in S^+_k \\
\frac{-1}{|S^-_k|\binom{|S_k|}{|S^-_k|}} & \text{if } i \in S^-_k \\
0 & \text{if } i \notin S_k
\end{cases}
\label{shap_simplified_formula}
\end{equation}

Prior to evaluating Formula~\ref{shap_simplified_formula} and the formulas presented in Sect~\ref{sec:more_than_shap}, we remove all cubes where $S^+_k \cap S^-_k \neq \emptyset$ as such cubes are unsatisfiable. In Appendix~\ref{sec:shap_on_WDNF_appendix}, we prove that Formula~\ref{shap_simplified_formula} correctly computes the Shapley values. The proof leverages the linearity and the null player out (NPO) properties of Shapley values. 
Additionally, we show how Formula~\ref{shap_simplified_formula} can be computed in linear time and applied to Weighted Conjunctive Normal Form (WCNF) formulas~\cite{silva2021maxsat}.

\section{Beyond SHAP}
\label{sec:more_than_shap}

In this section, we present formulas that efficiently compute the Shapley interaction values $\phi_{i,j}$ (Table~\ref{table_shap_iv}), Banzhaf values $\beta_i$, and Banzhaf interaction values $\beta_{i,j}$ over WDNFs. Proofs of correctness appear Appendix~\ref{sec:interaction_values}.

Shapley interaction values measure how the interaction between two features affects the prediction~\cite{PB_shap_and_banzhaf}. One possible definition is:
\begin{definition}[Shapley Interaction Values]
\label{shap_iv_def}
Shapley interaction value of features $f_i$ and $f_j$ is the difference between the Shapley values of $f_j$ when $f_i$ always participates and when $f_i$ is always missing: 
$\phi_{i,j\;i \neq j}=\phi_{j|i=1}-\phi_{j|i=0}$
\label{definition_shap_interaction_values}
\end{definition}

By combining Formula~\ref{shap_simplified_formula} with Def.~\ref{definition_shap_interaction_values}, we can derive simple formulas for Shapley interaction values, see Table~\ref{table_shap_iv} for further details:

\begin{table}[t]
\centering
\def\arraystretch{1.5}
\begin{tabular}{l|l|l|l}
  & $i \in S^+_k$ & $i \in S^-_k$ & $i \notin S_k$ \\\hline
$j \in S^+_k$ & $\frac{w}{(|S^+_k|-1)\binom{|S_k|-1}{|S^+_k|-1}}$ & $\frac{-w}{|S^+_k|\binom{|S_k|-1}{|S^+_k|}}$ & 0 \\
$j \in S^-_k$ & $\frac{-w}{|S^-_k|\binom{|S_k|-1}{|S^-_k|}}$ & $\frac{w}{(|S^-_k|-1)\binom{|S_k|-1}{|S^-_k|-1}}$ & 0 \\
$j \notin S_k$ & 0 & 0 & 0 \\
\end{tabular}
\caption{\label{table_shap_iv} To calculate $\phi_{i,j}$ iterate through all the cubes of the WDNF formula. For each cube $c_k$ and pair of variables $i,j \in S_k$ select the appropriate cell from the table above and apply its formula.}
\end{table}

Banzhaf values satisfy three of Shapley's four properties: null player, symmetry, and linearity. However, they do not satisfy efficiency~\cite{banzhaf1965}. They also possess another useful property: the Banzhaf value of player $i$, equals the difference in the expected game's profit, under a uniform distribution over all subsets, with and without $i$:

\begin{equation} \label{banzhaf_expectations_def}
\beta_i(M) = \mathbb{E} [M(S)|i\in{S}]-\mathbb{E} [M(S)|i\notin{S}]
\end{equation}

Previous work has shown how to calculate Banzhaf values on decision trees \cite{banzhaf_for_decision_trees, shapiq}, facts in query answering \cite{abramovich2023banzhafvaluesfactsquery}, and on other tasks. The formula below enables their linear-time computation on a WDNF formula:

\begin{equation}
\beta_i(F) = \sum\limits_{k=1}^m \frac{w_k}{2^{|S_k|-1}} \times 
\begin{cases}
1 & \text{if } i \in S^+_k \\
-1 & \text{if } i \in S^-_k \\
0 & \text{if } i \notin S_k
\end{cases}
\label{banzhaf_simplified_formula}
\end{equation}


Banzhaf interaction values examine the difference in expectations when feature $i$ and $j$ are both missing/participating versus when only one participates~\cite{interaction_values}.
Given a WDNF they be computed using the formula below:

\begin{equation}
\beta_{i,j\;i \neq j}(F) = \sum\limits_{k=1}^m \frac{w_k}{2^{|S_k|-2}} \times 
\begin{cases}
1 & \text{if } i,j\in{S^+_k} \\
1 & \text{if } i,j\in{S^-_k} \\
-1 & \text{if } i\in{S^+_k} \land j\in{S^-_k} \\
-1 & \text{if } i\in{S^-_k} \land j\in{S^+_k} \\
0 & \text{otherwise }
\end{cases}
\label{banzhaf_simplified_formula_interactions}
\end{equation}

\section{Decision Pattern}
\label{sec:decision_pattern}
This section introduces the concept of a \emph{decision pattern}, central to \algname. We begin by defining decision trees and root-to-leaf paths, and then build on these foundations to define decision patterns and present the efficient \emph{CalcDecisionPatterns} algorithm for computing them.

\begin{definition}[Decision Tree]  
\label{def:decision_tree}
A \emph{decision tree} is a rooted binary tree $T = (N_T = \{L_T \cup I_T\}, E_T, r_T)$, where:  
\begin{itemize}  
    \item $r_T \in N_T$ is the \emph{root node}.     
    \item Each node $n \in N_T$ is either a childless \emph{leaf} $l \in L_T$ or an \emph{inner node} $n \in I_T$ with two children: $n.\textit{left}$ and $n.\textit{right}$.
    \item A leaf $l \in L_T$ stores an output value $w_l \in \mathbb{R}$.  
    \item An inner node $n \in I_T$ is associated with a feature $n.\textit{feature} \in \{1, \dots, h\}$ and a threshold value $\theta_n \in \mathbb{R}$.
\end{itemize}  

For a node $n$ and consumer feature values $c = (c_1, c_2, \dots, c_h) \in \mathbb{R}^h$, we define the function $n.\textit{split}(c)$:  
\[
n.\textit{split}(c) = \begin{cases}  
\text{True} & \text{if } c_{n.\textit{feature}} < \theta_n \\  
\text{False} & \text{otherwise}  
\end{cases}  
\]  

\end{definition}

\begin{definition}[Root-to-Leaf Path]
Given a decision tree $T$ and its leaf $l \in L_T$, the \emph{root-to-leaf path} of $l$ is the unique simple path from the root $r_T$ to $l$: $(n_1\equiv r_T, n_2, \dots, n_{D-1}, n_D\equiv l)$.  
\end{definition}

\begin{definition}[Decision Pattern]
\label{def:decision_pattern}
Given a decision tree $T$, its leaf $l \in L_T$, their root-to-leaf path $(n_1\equiv r_T, n_2, \dots, n_{D-1}, l)$, and consumer feature values $c = (c_1, c_2, \dots, c_h) \in \mathbb{R}^h$, the \emph{decision pattern} $p$ is a binary sequence of length $D-1$. The $i$'th bit in the sequence is:

\[ 
p[i] = 
\begin{cases} 
1 & \text{if } (n_i.\textit{split}(c) = \text{True}) \land (n_i.left=n_{i+1}) \\[6pt] 
1 & \text{if } (n_i.\textit{split}(c) = \text{False}) \land (n_i.right=n_{i+1}) \\[6pt] 
0 & \text{otherwise} 
\end{cases} 
\]

This bit indicates whether, at node $n_i$, the consumer $c$ would follow the root-to-leaf path. A value of 1 means the consumer continues along the path to $n_{i+1}$, while a value of 0 means it would branch off in a different direction.
\end{definition}

In Fig.~\ref{fig:wdnf_construction}, the `Consumer Pattern' and `Baseline Pattern' columns illustrate how the decision patterns are computed.


Let $C$ be the consumer data matrix with rows $c \in C$, and define $n.\text{split}(C) = (n.\text{split}(c))_{\forall c \in C}$.

Given a decision tree $T$ with $L$ leaves and consumer data $C$ of size $n$, \emph{CalcDecisionPatterns} (Alg.~\ref{alg:calc_decision_patterns}) traverses $T$ using Breadth-First Search (BFS), applying Def.~\ref{def:decision_pattern} at each node. Running in $O(nL)$ time, it returns a dictionary $P$ mapping each leaf $l_j \in L_T$ to its consumer decision patterns, where $P[l_j][c_i]$ stores the pattern for consumer $c_i \in C$ at leaf $l_j$.

\begin{algorithm}[t]
\caption{Mapping Each Leaf to Its Decision Patterns}
\label{alg:calc_decision_patterns}
\begin{algorithmic}[1]
\Function{CalcDecisionPatterns}{$T, C$}
    \State $P_{\textit{leaves}} \gets \{\}$ \label{line:p_init}
    \State $P_{\textit{all}} \gets \{r_T: (0)_{\forall c \in C}\}$ \label{line:p_inner_nodes_init}
    \For{$n$ in BFS(T)}: \label{line:algo_bfs}
        \If{$n$ is a leaf}
            \State $P_{\textit{leaves}}[n] = P_{\textit{all}}[n]$
        \Else
            \State $P_{\textit{all}}[n.\textit{left}] = (P_{\textit{all}}[n] << 1)
        + n.\textit{split}(C)$ 
            \State $P_{\textit{all}}[n.\textit{right}] = (P_{\textit{all}}[n] << 1)
        + \lnot n.\textit{split}(C)$ 
           \label{line:inner_node_case_end}
        \EndIf \label{line:leaf_case_end}
    \EndFor
    \State \Return $P_{\textit{leaves}}$ \label{line:return_p}
\EndFunction
\end{algorithmic}
\end{algorithm}

\section{Constructing a WDNF Representation}
\label{sec:baseline_shap}

\begin{figure*}[t]
\centering
\includegraphics[width=0.98\textwidth]{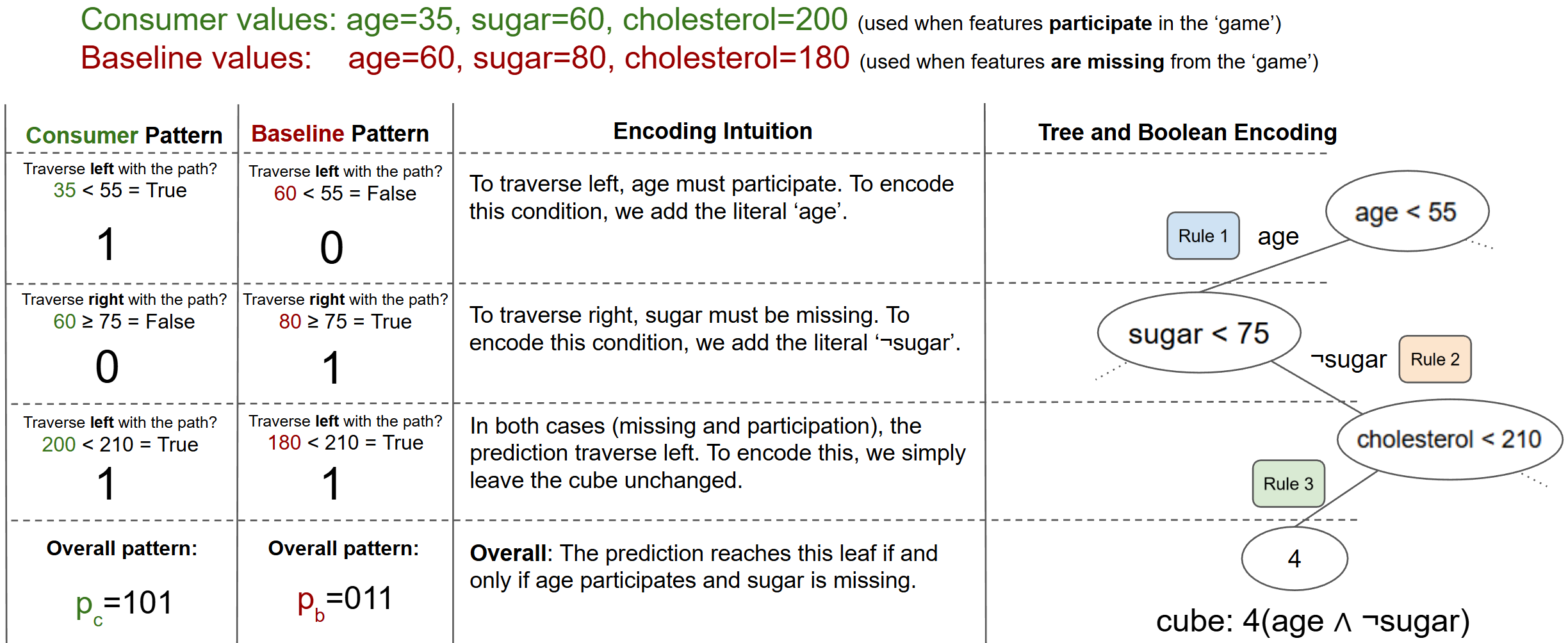} 
\caption{ An illustration of the WDNF construction process on a small example. The consumer and baseline values are shown alongside a root-to-leaf path. The table explains how a weighted cube is iteratively constructed from these inputs. To compute the Shapley value contribution of the shown leaf, apply Formula~\ref{shap_simplified_formula} to the constructed weighted cube: $4(age \land \neg sugar)$. For Banzhaf values, use Formula~\ref{banzhaf_simplified_formula}; for Banzhaf interaction values, use Formula~\ref{banzhaf_simplified_formula_interactions}; and for Shapley interaction values, use Table~\ref{table_shap_iv}.
}
\label{fig:wdnf_construction}
\end{figure*}

Our baseline SHAP algorithm constructs a WDNF formula $F$ representing the model’s characteristic function. An illustration of this construction is provided in Fig.~\ref{fig:wdnf_construction}. The variables in $F$ correspond to model features, and each cube captures the contribution of a single leaf. A variable $f_i$ set to 1 indicates the feature is present (i.e., set to the consumer’s value $c[f_i]$), while 0 denotes a missing feature (i.e., set to the baseline value $b[f_i]$). A cube is satisfied if and only if the prediction reaches its corresponding leaf:

{\fontsize{9}{10}\selectfont
\begin{equation}
\label{model_to_wdnf_equation}
\model \left(  \mathbf{f} = \begin{cases} 
c[f_i] & \text{if } f_i \in S \\
b[f_i] & \text{if } f_i \notin S
\end{cases} \right) = F \left( \mathbf{x} = \begin{cases} 
1 & \text{if } f_i \in S \\
0 & \text{if } f_i \notin S
\end{cases} \right)
\end{equation}
}

We present four simple rules for constructing the WDNF formula. Our goal is to construct a cube that represents a leaf $l$, a consumer $c$, and a baseline $b$. We first run \emph{CalcDecisionPatterns} to obtain the consumer decision pattern $p_c$ and baseline decision pattern $p_b$. Then, using the path, $p_c$, $p_b$, and the four rules below, we construct the cube. The rules are applied from the tree root ($i=1$) down to the leaf’s parent node ($i=D-1$):

\begin{enumerate}
    \item If $p_c[i] = 1$ and $p_b[i] = 0$: The prediction reaches $n_{i+1}$ only when the consumer value is used—i.e., when $f_i$ (that is, $n_i.\mathit{feature}$) participates. Add $f_i$ to the cube.
    
    \item If $p_c[i] = 0$ and $p_b[i] = 1$: The prediction reaches $n_{i+1}$ only when the baseline value is used—i.e., when $f_i$ is missing. Add the literal $\neg f_i$ to the cube.
    
    \item If $p_c[i] = 1$ and $p_b[i] = 1$: The prediction always reaches $n_{i+1}$. Leave the cube unchanged.
    
    \item If $p_c[i] = 0$ and $p_b[i] = 0$: The prediction never reaches $n_{i+1}$. Set the cube to $\bot$ (unsatisfiable).
\end{enumerate}

Since the number of decision patterns is limited, we can precompute the cube for every possible input. The \emph{MapPatternsToCube} function (Alg.~\ref{alg:fill_wdnf_table}) takes the list of features along a root-to-leaf path and applies the four rules above to map each pair of consumer and baseline patterns ($p_c$ and $p_b$) to the corresponding cube.

\begin{algorithm}[t]
\caption{Decision Patterns to Cube Mapping}
\label{alg:fill_wdnf_table}
\begin{algorithmic}[1]
\Function{MapPatternsToCube}{$\textit{features}$}
    \State $d \gets\{ 0 \mapsto \{ 0 \mapsto (\emptyset, \emptyset) \} \}$
    \For{$f  \in \textit{features}$}
        \State $d_{old} \gets d$
        \State $d \gets \{\}$
        \For{$p_c$ in $d_{old}$}
            \For{$p_b$ in $d_{old}[p_c]$}
                \State $(S^+, S^-) \gets d_{old}[p_c][p_b]$ \label{line:extract_splus_sminus}
                \State $d[2p_c + 1][2p_b + 0] \gets (S^+ \cup \{f\}, S^-)$ \label{line:four_rules_start}
                \State $d[2p_c + 0][2p_b + 1] \gets (S^+, S^- \cup \{f\})$
                \State $d[2p_c + 1][2p_b + 1] \gets (S^+, S^-)$
            \EndFor
        \EndFor
    \EndFor
    \State \Return $d$
\EndFunction
\end{algorithmic}
\end{algorithm}

\section{A Generic Baseline SHAP Implementation}
\label{sec:baseline_shap_impl}
Let $P[l][x]$ denote the decision pattern of consumer or baseline values $x$ on leaf $l$, computed using \emph{CalcDecisionPatterns}. Let $d_l$ be the mapping for leaf $l$ built using \emph{MapPatternsToCube}. Using $P$ and $d_l$, one can now compute the \emph{Baseline SHAP}. The WDNF of a decision tree $T$, a consumer $c$, and baseline values $b$ can be calculated using Formula~\ref{eq_generalized_baseline_shap}, which constructs a WDNF by aggregating the cubes of all leaves, each weighted by its corresponding leaf weight.

\begin{equation}
\label{eq_generalized_baseline_shap}
\sum_{l \in L_T} w_l \cdot d_l[\:P[l][c]\:][ \: P[l][b] \:]
\end{equation}

We apply the linear-time Shapley values formula (Formula~\ref{shap_simplified_formula}) to the resulting WDNF to compute the desired baseline SHAP. Similarly, this WDNF can be used to compute Shapley interaction values, Banzhaf values, or Banzhaf interaction values by leveraging Table~\ref{table_shap_iv}, Formula~\ref{banzhaf_simplified_formula}, or Formula~\ref{banzhaf_simplified_formula_interactions}, respectively.

\section{Efficient Background SHAP Equation}
\label{sec:bg_eq}
\emph{Background SHAP} takes three inputs: a decision tree $T$ with depth $D$ and $L$ leaves; consumer data $C$ with $n$ rows; and background data $B$, a matrix with $m$ rows of baseline feature values.
Unlike Baseline SHAP, which uses one fixed baseline for missing features, Background SHAP averages the Shapley values across all baselines in the background data, providing more accurate results. 

Formula~\ref{trival_bg} computes Background SHAP for a single consumer $c \in C$ and a single feature $i$ in $O(m)$ (assuming $L$ is constant). It uses the baseline WDNF formula (Formula~\ref{eq_generalized_baseline_shap}) and the linear-time Shapley values formula (Formula~\ref{shap_simplified_formula}).  

\begin{equation}
\label{trival_bg}
\phi_i (\frac{1}{|B|}\sum_{b_k \in B} \sum_{l \in L_T} w_l \cdot d_l[\:P[l][c]\:][ \: P[l][b_k] \:])
\end{equation}
Using Formula~\ref{trival_bg}, computing Background SHAP for all $n$ consumers in $C$ takes $O(nm)$ time. We derive a new $O(n + m)$ formula that leverages GPU-friendly matrix multiplication. The derivation is detailed below:

{\fontsize{9}{10}\selectfont
\begin{equation}
\label{eq:bg_shap_derivation}
\begin{aligned}
\phi_i ( \frac{1}{|B|}\sum_{b_k \in B} \sum_{l \in L_T} w_l \cdot d_l[\:P[l][c]\:][ \: P[l][b_k] \:]) &\stackrel{(a)}= \\
\frac{1}{|B|}\sum_{l \in L_T} \sum_{b_k \in B} w_l \cdot \phi_i ( d_l[\:P[l][c]\:][ \: P[l][b_k] \:]) &\stackrel{(b)}= \\
\frac{1}{|B|}\sum_{l \in L_T} \sum_{p_b \in \{F, T\}^{D-1}} vc_{l,p_b} \cdot w_l \cdot \phi_i ( d_l[\:P[l][c]\:][p_b]) &\stackrel{(c)}= \\
\frac{1}{|B|}\sum_{l \in L_T} \sum_{p_b \in \{F, T\}^{D-1}} \vc[l][p_b] \cdot w_l \cdot \phi_i ( d_l[\:P[l][c]\:][p_b]) &\stackrel{(d)}= \\
\sum_{l \in L_T} w_l \cdot (\sum_{p_b \in \{F, T\}^{D-1}} \frac{\vc[l][p_b]}{|B|} \cdot \phi_i ( d_l[\:P[l][c]\:][p_b])) &\stackrel{(e)}= \\
\sum_{l \in L_T} (w_l \cdot \mathbf{M_{l,i}} \cdot \mathbf{f_l})[\:P[l][c]\:] &\stackrel{(f)}= \\
\sum_{l \in L_T} \mathbf{s_{l,i}}[\:P[l][c]\:] &\stackrel{(g)}= \\ 
\end{aligned}
\end{equation}
}

We explain each transformation step below:

\begin{enumerate}[label=\alph*)]
    \item This is Formula~\ref{trival_bg}.

    \item Follows from the linearity property of the Shapley value: Functions of the form $f_1 = f_2 + w \cdot f_3$ satisfy $\phi_i(f_1) = \phi_i(f_2) + w \cdot \phi_i(f_3)$. We also reorder the summations.\label{stepb}

    \item Mark $vc_{l,p_b} = |\{ b_k \in B \mid P[l][b_k] = p_b \}|$. Since the summands depend only on the decision patterns, we can rewrite the summation by looping over all possible baseline decision patterns. For each pattern, we multiply by the number of background instances that match it.

    \item At the start of the algorithm, we precompute the number of baselines matching each pattern in $O(mL)$ time:\\
    $\vc = \emph{CalcDecisionPatterns}(T, B).\emph{value\_counts()}$\\
    where \emph{value\_counts} is a standard function from the \texttt{pandas} Python package. For each leaf $l$ and pattern $p_b$, we have $vc_{l,p_b}\!=\! \vc[l][p_b]\!=\!|\{ b_k \in B \mid P[l][b_k] = p_b \}|$. 
    
    This precomputation reduces the summation complexity of the formula from $O(nm)$ to $O(n+m)$.

    \item Simple arithmetic: We push $\frac{1}{|B|}$ into the inner summation and pull $w_l$ out.

  \item The inner summation becomes a matrix-vector product:

Let $\mathbf{f_l}$ be the frequency vector of the background patterns, where $\mathbf{f_l}[p_b] = \frac{\vc[l][p_b]}{|B|}$, i.e., the relative frequency of baseline pattern $p_b$ at leaf $l$.

Let $\mathbf{M_{l,i}}$ be the Shapley matrix, where $\mathbf{M_{l,i}}[p_c][p_b] = \phi_i(d_l[p_c][p_b])$, representing the contribution of leaf $l$ to feature $i$’s Shapley value for the pair $(p_c, p_b)$, assuming the leaf weight is 1.

The multiplication $w_l \cdot \mathbf{M_{l,i}} \cdot \mathbf{f_l}$ sums all the effects that leaf $l$ has on the Shapley values of player $i$ across all baseline decision patterns, weighted by their frequency. The result is a vector representing the Shapley value effects for all consumer patterns. 

Extracting $(w_l \cdot \mathbf{M_{l,i}} \cdot \mathbf{f_l})[P[l][c]]$ yields the Shapley value effect for consumer $c$ at leaf $l$.

\label{stepf}
  \item Since the vector $w_l \cdot \mathbf{M_{l,i}} \cdot \mathbf{f_l}$ is independent of the consumer $c$, we can precompute it for every leaf $l$ and feature $i$. We denote this vector by $\mathbf{s_{l,i}}$. After these vectors are built, computing SHAP values per consumer only requires fetching, for each leaf $l$, the element in $\mathbf{s_{l,i}}$ indexed by the consumer’s decision pattern at $l$. Computing SHAP values for all features takes $O(nLD)$ time, since each root-to-leaf path involves at most $D$ features.
\end{enumerate}

The derivation above uses only the linearity property of Shapley values (see step~\ref{stepb}. Therefore, it holds for any metric that satisfies linearity, including Shapley interaction values, Banzhaf values, and Banzhaf interaction values.

\section{Efficient Path-Dependent SHAP Equation}
\label{sec:pd_eq}
Instead of computing the frequencies using a background dataset, Path-Dependent SHAP estimates them using the nodes cover property (i.e. the number of training samples that reached the node during training). 

Path-Dependent SHAP can be computed by simply replacing the Background frequency vector $f_l$ with the Path-Dependent frequency vector $f_{l_{pd}}$ in Formula~\ref{eq:bg_shap_derivation}(\ref{stepf}. Given a decision tree $T$, a leaf $l$ with its root-to-leaf path $(n_1 \equiv r_T, n_2, \dots, n_{D-1}, n_D \equiv l)$, and a baseline decision pattern $p_b$, the vector $f_{l_{pd}}$ is computed as follows:

\begin{equation}
\label{eq:path_dependent_estimation}
\mathbf{f_{l_{pd}}}[p_b] = \prod_{i=1}^{D-1} \begin{cases}
\frac{n_{i+1}.cover}{n_i.cover} & \text{if } p_b[i] = 1 \\[0.5em]
1 - \frac{n_{i+1}.cover}{n_i.cover} & \text{if } p_b[i] = 0 \\
\end{cases}
\end{equation}

For example, the frequency of the pattern $5$ (binary $101$) along the root-to-leaf path $(n_1 \equiv r_T, n_2, n_3, n_4 \equiv l)$ is:

\[
\mathbf{f_{l_{pd}}}[5] =
\frac{n_2.cover}{n_1.cover} \cdot (1 - \frac{n_3.cover}{n_2.cover}) \cdot \frac{n_4.cover}{n_3.cover}
\]  

\section{\algname Algorithm}
\label{sec:woodelf_alg}

We are now ready to present our main algorithm, \algname, shown in Alg.~\ref{alg:woodelf_code}. 
\algname takes as input a decision tree $T$, consumer data $C$, background data $B$ (empty $B$ means Path-Dependent SHAP), and a function $v$. 

The function $v$ takes a cube where each variable represents a feature, e.g. $(\text{age} \land \lnot \text{sugar})$, and returns a mapping from feature subsets (of size one or more) to real numbers. For instance, $v$ can compute Shapley values for individual features, as well as interaction values for feature pairs.

\algname outputs Path-Dependent or Background (depending on whether $B$ is empty) Shapley/Banzhaf values or interaction values (depending on $v$) on the decision tree $T$ for the given consumers. To compute values for a decision tree ensemble, one simply runs \algname on each tree and sums the results. Correctness follows from the linearity property of both Shapley and Banzhaf values.

\begin{algorithm}[t]
\caption{An Efficient SHAP and Banzhaf algorithm}
\label{alg:woodelf_code}
\begin{algorithmic}[1]
\Function{Woodelf}{$T, C, B, v$}
    \Statex \Comment{Step 1, compute $f$}
    \If{$|B| > 0$} \label{line:f_begin}
    \Comment{Background}
    \State $P_b = \textit{CalcDecisionPatterns}(T, B)$
    \State $f = P_b.\textit{value\_counts}(\textit{normalize}=True)$ \label{line:value_counts}
    \Else 
    \Comment{Path Dependent}
    \State Compute $f$ using $T$ and Formula~\ref{eq:path_dependent_estimation}
    \EndIf \label{line:f_end}

    \Statex \Comment{Step 2, compute $M$}
    \State $M = \{\}$ \label{line:M_start}
    \For{$l \in L_T$}
        \State $path = root\_to\_leaf\_path(T, l)$
        \State $path\_features = (n.\textit{feature})_{\forall n \in path}$
        \State $d_l = \textit{MapPatternsToCube}(path\_features)$
        \For{$p_c$ in $d_l$}
            \For{$p_b$ in $d_l[p_c]$}
                \State $cube = d_l[p_c][p_b]$
                \For{$feature, value$ in $v(cube)$}
                    \State $M[l][feature][p_c][p_b] = value$
                \EndFor
            \EndFor
        \EndFor
    \EndFor \label{line:M_end}

    \Statex \Comment{Step 3, compute $s$} 
    \State $s = \{\}$ \label{line:s_start}
    \For{$l \in L_T$}
        \For{$feature$ in $M[l]$}
            \State $s[l][feature] = w_l \cdot M[l][feature] \cdot f[l]$ \label{line:matrix_mult}
        \EndFor
    \EndFor \label{line:s_end}

    \Statex \Comment{Step 4, compute the actual values}
    \State $P_c = \textit{CalcDecisionPatterns}(T, C)$ \label{line:final_computation_start}
    \State $values = \{\}$
    \For{$l \in L_T$}
        \For{$feature$ in $s[l]$}
            \State $values[feature] \mathrel{+}= s[l][feature][\:P_c[l] \:]$ \label{line:np_indexing}
        \EndFor
    \EndFor \label{line:final_computation_end}

    \State \Return $values$
\EndFunction
\end{algorithmic}
\end{algorithm}

The algorithm uses the equations from Sect.~\ref{sec:bg_eq} and~\ref{sec:pd_eq}. Lines~\ref{line:f_begin}–\ref{line:f_end} compute the frequency vector $f$ for either Background or Path-Dependent SHAP. Lines~\ref{line:M_start}–\ref{line:M_end} compute the contribution matrix $M$. Lines~\ref{line:s_start}–\ref{line:s_end} use $f$ and $M$ to compute $s$, the vector mapping consumer patterns to contributions. Finally, lines~\ref{line:final_computation_start}–\ref{line:final_computation_end} use $s$ to compute the desired Shapley/Banzhaf values  or interaction values.

\subsection{Algorithmic Improvements and Complexity}
\label{sec:speed_ups}

Our actual implementation is more advanced than the version shown in Alg.~\ref{alg:woodelf_code}. It incorporates several key optimizations that substantially reduce the algorithm's runtime — with the first even improving its theoretical complexity:

\begin{enumerate}
    \item Each matrix $M[l][\text{feature}]$ has size $4^D$ (recall that $D$ is the depth of the tree), since both $p_c$ and $p_b$ can take any value between $0$ and $2^D$. Furthermore, the dictionary returned by \emph{MapPatternsToCube} has size $3^D$, as the number of cubes triples at each step. This means the matrix $M[l][\text{feature}]$ is sparse, with at most $3^D$ non-zero entries. By using sparse matrix multiplications, we reduce the complexity of line~\ref{line:matrix_mult} from $O(4^D)$ to $O(3^D)$, thereby improving \algname’s overall complexity (see Table ~\ref{complexity_table}).
    
\item The function \emph{MapPatternsToCube} and the matrix $M[l][\text{feature}]$ depend solely on the features repeated along the root-to-leaf path and the path’s length. For example, all leaves at depth 6 with unique features share the same matrices. We exploit this by using a caching mechanism, which significantly reduces the computations in lines~\ref{line:M_start}--\ref{line:M_end}.

    \item For every consumer/baseline $x$, the decision pattern of neighboring leaves $l_i$ and $l_{i+1}$ ($\exists n$ s.t.~$n.left = l_i$, $n.right = l_{i+1}$) differ only in the last bit (see Def.~\ref{def:decision_pattern}).  
    We leverage this property to accelerate lines~\ref{line:value_counts} and~\ref{line:np_indexing}.
    
  \item We only need to compute half of the Shapley/Banzhaf interaction values because, for all $i, j$, $\phi_{i,j} = \phi_{j,i}$.
    
  \item The length of each decision pattern is limited by the tree’s depth. In the \emph{CalcDecisionPatterns} algorithm, we select the appropriate unsigned integer type (e.g., \texttt{uint8}, \texttt{uint16}, \texttt{uint32}) based on the tree’s maximum depth. This reduces compute time by enabling more efficient use of SIMD.

  \item Line~\ref{line:np_indexing} utilizes vectorized NumPy indexing. It treats $P_c[l]$ as a series of indices and returns a series of the corresponding elements from $s[l][feature]$.

\end{enumerate}

\begin{table}[b]
\centering
\def\arraystretch{1.3}
\setlength{\tabcolsep}{1mm}
{\fontsize{9}{10}\selectfont
\begin{tabular}{l|l|l}
Task & \algname & State-of-the-art \\\hline
PD & $O(nTLD\!+\!TL3^D\!D)$ & $O(nTLD\!+\!TL2^D\!D)$\\
BG & $O(mTL\!+\!nTLD\!+\!TL3^D\!D)$ & $O(mTL\!+\!nT3^D\!D)$\\
PDIV & $O(nTLD^2\!+\!TL3^D\!D^2)$ & $O(nTLD^2)$\\
BGIV & $O(mTL\!+\!nTLD^2\!+\!TL3^D\!D^2)$ & $O(mTL\!+\!nT3^D\!D^2)$\\
\end{tabular}}
\caption{\label{complexity_table} Complexity results. Legend: PD = Path-Dependent SHAP, BG = Background SHAP, BGIV/PDIV = Calculation of all Shapley interaction values, $n=|C|$, $m=|B|$, $T$ = number of trees, $L$ = leaves per tree, and $D$ = tree depth.}
\end{table}

Table~\ref{complexity_table} summarizes the complexity of \algname, with detailed analysis in Appendix~\ref{sec:complexity_analysis}. The state-of-the-art Path-Dependent SHAP algorithm is FastTreeShap, while PLTreeShap is the state-of-the-art for Background SHAP; both outperform the $\textsc{shap}$ Python package.

\algname improves on PLTreeShap’s complexity when $L < n$ by leveraging a core step (lines~\ref{line:M_start}--\ref{line:matrix_mult}) whose cost is independent of dataset size—a key factor behind the empirical gains shown in the next section. However, this step might become a bottleneck for very deep trees or small datasets, where PLTreeShap and the \textsc{shap} Python package may outperform \algname.

\begin{table*}[t]
\centering
\def\arraystretch{1.4}

\hspace{2cm} \textbf{IEEE-CIS} 
\newline
\begin{tabularx}{\textwidth}{l|X|r|r|r|r|r}
\textbf{Task} & \textbf{SOTA CPU Algorithm} & \textbf{\textsc{shap} package} & \multicolumn{2}{c|}{\textbf{SOTA}} & \multicolumn{2}{c}{\textbf{\algname}} \\
             &&& \textbf{CPU} & \textbf{GPU} & \textbf{CPU} & \textbf{GPU} \\\hline

Path-Dependent SHAP & FastTreeShap v2 & 151 sec & 16 sec & 0.9 sec & 6 sec & 3.3 sec \\
Background SHAP & PLTreeShap & 10 days* & 245 sec & 14 hours* & 12 sec & 10 sec \\
Path-Dependent SHAP IV & FastTreeShap v1 & 33 hours* & 350 sec & 105 sec* & 11 sec & 8 sec \\
Background SHAP IV & PLTreeShap & X & 597 sec* & X & 19 sec & 12 sec \\
\end{tabularx}

\vspace{0.5em}

\textbf{KDD Cup 1999}
\begin{tabularx}{\textwidth}{l|X|r|r|r|r|r}
\textbf{Task} & \textbf{SOTA CPU Algorithm} & \textbf{\textsc{shap} package} & \multicolumn{2}{c|}{\textbf{SOTA}} & \multicolumn{2}{c}{\textbf{\algname}} \\
             &&& \textbf{CPU} & \textbf{GPU} & \textbf{CPU} & \textbf{GPU} \\\hline

Path-Dependent SHAP & FastTreeShap v2 & 51 min & 373 sec & 7.9 sec & 96 sec & 3.3 sec \\
Background SHAP & PLTreeShap & 8 years* & 44 min & 3 months* & 162 sec & 16 sec \\
Path-Dependent SHAP IV & FastTreeShap v1 & 8 days* & 221 min* & 229 sec* & 193 sec & 6 sec \\
Background SHAP IV & PLTreeShap & X & 105 min* & X & 262 sec & 19 sec \\
\end{tabularx}

\caption{\label{performance_table} 
Performance comparison between the \textsc{shap} Python package, the state-of-the-art (SOTA) methods and \algname. The ‘SOTA CPU Algorithm’ column lists the best known CPU algorithm for each task, and the ‘SOTA, CPU’ column shows its runtime. The SOTA GPU algorithm for all tasks is GPUTreeSHAP. 
''SHAP'' refers to computing the Shapley values of all features, while ''SHAP IV'' refers to computing all Shapley interaction values. 
Values marked with * are estimates. Estimation was necessary due to RAM limitations, long runtimes, and implementation constraints. Notably, the \textsc{shap} Python package currently supports background datasets of up to 100 rows, implicitly using only the first 100 rows of larger datasets. See Appendix~\ref{sec:estimation} for details on the estimation method. 'X' means there is no available implementation for this task.}

\end{table*}

\section{Experimental Results}
\label{sec:experimental_results}

We implemented the \algname\ Python package, which includes our algorithm. The notebooks used in the experiments are provided in the \algname Experiments repository. Detailed setup and empirical validation of the algorithm’s correctness appear in Appendix~\ref{sec:experimental_results_appendix}.

We compared \algname performance with that of the \shap package and the relevant SOTA algorithms.
To evaluate \algname at scale, we selected two of the largest and well-known tabular datasets.

The IEEE-CIS fraud detection dataset from the Kaggle competition is widely recognized, with related studies including~\cite{frauddatasetwork1, frauddatasetwork2, frauddatasetwork3, frauddatasetwork5, frauddatasetwork6}. In IEEE-CIS, $|B| = \num{118108}$, $|C| = \num{472432}$, and $F = \num{397}$ (after applying one-hot encoding to categorical features in both datasets, where $F$ denotes the total number of features after preprocessing).

The KDD Cup 1999 dataset serves as a well-established benchmark for network intrusion detection research, with related studies including~\cite{kdd_cup_winner, kdd_cup_data_review, recent_deep_learning_work_on_kdd}. In KDD Cup: $|B|=\num{4898431}, \; |C|=\num{2984154}, \; F=\num{127}$. 

In both cases, we trained an XGBoost regressor with 100 trees of depth 6 (XGBoost's default \textit{max\_depth}). All  algorithms were run sequentially without parallelization.

Our experiments were conducted in Google Colab's CPU environment with the additional RAM option enabled, utilizing 50GB of RAM instead of the standard 12GB. The GPU execution used the A100 GPU Colab runtime type. 

Table~\ref{performance_table} shows that \algname outperforms the state-of-the-art in all tasks—except GPU Path-Dependent SHAP on IEEE, where runtimes are already short. For Background SHAP, \algname achieves speed-ups of 24$\times$, 50$\times$, 165$\times$, and 333$\times$ on GPU, and 16$\times$, 20$\times$, 31$\times$, and 24$\times$ on CPU, relative to the best method on any hardware platform.

A striking example of historical improvement is Background SHAP on the KDD dataset. In 2020,~\cite{explainable_ai_trees} introduced the first polynomial-time algorithm for this task, but its quadratic complexity would still require an estimated 8 years on this dataset. Two years later,~\cite{GPUTreeShap} proposed a GPU-based implementation, reducing runtime to 3 months. In 2023,~\cite{linear_background_shap} achieved a breakthrough with a linear-time method, cutting runtime to 44 minutes. Our \algname algorithm completes the task in just 162 seconds on CPU and 16 seconds on GPU. Over just five years, the runtime has been reduced from 8 years to mere seconds!

\section{Conclusion}
\label{sec:conclusion}

We introduced \algname, a fast, unified, and GPU-friendly SHAP algorithm leveraging a novel connection between decision trees and Boolean logic. On the evaluated datasets, it outperformed state-of-the-art Background SHAP methods by 16--31$\times$ on CPU and 24--333$\times$ on GPU.

\algname provides a unified framework for model interpretability, supporting a range of attribution metrics (e.g., Banzhaf values) across different characteristic function definitions (e.g., Path-Dependent).
With its efficiency and flexibility, \algname lays a solid foundation for future research into more advanced and precise interpretability methods.

\bibliography{sample}

@article{original_shapley_paper,
    author = "Shapley, L. S.",
    title = "A value of n-person games",
    journal = "Contributions to the Theory of Games",
    pages = "307–317",
    year = "1953"
}

@article{explainable_ai_trees,
    author = {Lundberg, Scott M. and Erion, Gabriel and Chen, Hugh and DeGrave, Alex and Prutkin, Jordan M. and Nair, Bala and Katz, Ronit and Himmelfarb, Jonathan and Bansal, Nisha and Lee, Su-In},
    title = "From local explanations to global understanding with explainable AI for trees",
    journal = "Nature Machine Intelligence",
    pages = "56–67",
    volume={2},
    year = "2020",
    doi={10.1038/s42256-019-0138-9}
}

@InProceedings{banzhaf_for_decision_trees,
  title = 	 {Improved feature importance computation for tree models based on the Banzhaf value},
  author =       {Karczmarz, Adam and Michalak, Tomasz and Mukherjee, Anish and Sankowski, Piotr and Wygocki, Piotr},
  booktitle = 	 {Proceedings of UAI 2022},
  pages = 	 {969--979},
  year = 	 {2022},
  editor = 	 {Cussens, James and Zhang, Kun},
  volume = 	 {180},
  series = 	 {Proceedings of Machine Learning Research},
  month = 	 {01--05 Aug},
  publisher =    {PMLR}
}

@Article{shapley_on_ML,
author={Chen, Hugh
and Covert, Ian C.
and Lundberg, Scott M.
and Lee, Su-In},
title={Algorithms to estimate Shapley value feature attributions},
journal={Nature Machine Intelligence},
year={2023},
month={Jun},
day={01},
volume={5},
number={6},
pages={590-601},
abstract={Feature attributions based on the Shapley value are popular for explaining machine learning models. However, their estimation is complex from both theoretical and computational standpoints. We disentangle this complexity into two main factors: the approach to removing feature information and the tractable estimation strategy. These two factors provide a natural lens through which we can better understand and compare 24 distinct algorithms. Based on the various feature-removal approaches, we describe the multiple types of Shapley value feature attributions and the methods to calculate each one. Then, based on the tractable estimation strategies, we characterize two distinct families of approaches: model-agnostic and model-specific approximations. For the model-agnostic approximations, we benchmark a wide class of estimation approaches and tie them to alternative yet equivalent characterizations of the Shapley value. For the model-specific approximations, we clarify the assumptions crucial to each method's tractability for linear, tree and deep models. Finally, we identify gaps in the literature and promising future research directions.},
doi={10.1038/s42256-023-00657-x},
}

@article{adverse_action_ecoa,
  title={AI and machine learning-based credit underwriting and adverse action under the ECOA},
  author={Knight, Eric},
  journal={Bus. \& Fin. L. Rev.},
  volume={3},
  pages={236},
  year={2019},
  publisher={HeinOnline}
}

@misc{many_missing_feature_definitions,
      title={The many Shapley values for model explanation}, 
      author={Mukund Sundararajan and Amir Najmi},
      year={2020},
      eprint={1908.08474},
      archivePrefix={arXiv},
      primaryClass={cs.AI},
      url={https://arxiv.org/abs/1908.08474}, 
}

@misc{arenas2021tractabilityshapscorebasedexplanationsdeterministic,
      title={The Tractability of SHAP-Score-Based Explanations over Deterministic and Decomposable Boolean Circuits}, 
      author={Marcelo Arenas and Pablo Barceló Leopoldo Bertossi and Mikaël Monet},
      year={2021},
      eprint={2007.14045},
      archivePrefix={arXiv},
      primaryClass={cs.AI},
      url={https://arxiv.org/abs/2007.14045}, 
}

@article{shapley_on_queries,
   title={The Shapley Value of Tuples in Query Answering},
   volume={Volume 17, Issue 3},
   DOI={10.46298/lmcs-17(3:22)2021},
   journal={Logical Methods in Computer Science},
   publisher={Centre pour la Communication Scientifique Directe (CCSD)},
   author={Livshits, Ester and Bertossi, Leopoldo and Kimelfeld, Benny and Sebag, Moshe},
   year={2021},
   month=sep }

@article{counting_dnf_is_hard,
author = {Provan, J. Scott and Ball, Michael O.},
title = {The Complexity of Counting Cuts and of Computing the Probability that a Graph is Connected},
journal = {SIAM Journal on Computing},
volume = {12},
number = {4},
pages = {777-788},
year = {1983},
doi = {10.1137/0212053},

eprint = { 
    
        https://doi.org/10.1137/0212053
    
    

}
,
    abstract = { Several enumeration and reliability problems are shown to be \# P-complete, and hence, at least as hard as NP-complete problems. Included are important problems in network reliability analysis, namely, computing the probability that a graph is connected and counting the number of minimum cardinality \$(s,t)\$-cuts or directed network cuts. Also shown to be \# P-complete are counting vertex covers in a bipartite graph, counting antichains in a partial order, and approximating the probability that a graph is connected and the probability that a pair of vertices is connected. }
}

@misc{abramovich2023banzhafvaluesfactsquery,
      title={Banzhaf Values for Facts in Query Answering}, 
      author={Omer Abramovich and Daniel Deutch and Nave Frost and Ahmet Kara and Dan Olteanu},
      year={2023},
      eprint={2308.05588},
      archivePrefix={arXiv},
      primaryClass={cs.DB},
      url={https://arxiv.org/abs/2308.05588}, 
}

@article{banzhaf1965,
  title={Weighted Voting Doesn't Work: A Mathematical Analysis},
  author={Banzhaf, John F.},
  journal={Rutgers Law Review},
  volume={19},
  pages={317--343},
  year={1965}
}

@article{PB_shap_and_banzhaf,
author={Grabisch, Michel
and Roubens, Marc},
title={An axiomatic approach to the concept of interaction among players in cooperative games},
journal={International Journal of Game Theory},
year={1999},
month={Nov},
day={01},
volume={28},
number={4},
pages={547-565},
abstract={An axiomatization of the interaction between the players of any coalition is given. It is based on three axioms: linearity, dummy and symmetry. These interaction indices extend the Banzhaf and Shapley values when using in addition two equivalent recursive axioms. Lastly, we give an expression of the Banzhaf and Shapley interaction indices in terms of pseudo-Boolean functions.},
doi={10.1007/s001820050125},
}

@article{interaction_values,
title = {Axiomatic characterizations of probabilistic and cardinal-probabilistic interaction indices},
journal = {Games and Economic Behavior},
volume = {55},
number = {1},
pages = {72-99},
year = {2006},
doi = {10.1016/j.geb.2005.03.002},
author = {Katsushige Fujimoto and Ivan Kojadinovic and Jean-Luc Marichal},
keywords = {Cooperative games, Interaction among players, Values and interaction indices},
abstract = {In the framework of cooperative game theory, the concept of interaction index, which can be regarded as an extension of that of value, has been recently proposed to measure the interaction phenomena among players. Axiomatizations of two classes of interaction indices, namely probabilistic interaction indices and cardinal-probabilistic interaction indices, generalizing probabilistic values and semivalues, respectively, are first proposed. The axioms we utilize are based on natural generalizations of axioms involved in the axiomatizations of values. In the second half of the paper, existing instances of cardinal-probabilistic interaction indices encountered thus far in the literature are also axiomatized.}
}

@article{frauddatasetwork1,
author = {Jiang, Shanshan and Dong, Ruiting and Wang, Jie and Xia, Min},
year = {2023},
month = {06},
pages = {305},
title = {Credit Card Fraud Detection Based on Unsupervised Attentional Anomaly Detection Network},
volume = {11},
journal = {Systems},
doi = {10.3390/systems11060305}
}

@InProceedings{frauddatasetwork2,
author="Xiao, Zhijia",
editor="Li, Xiaolong
and Yuan, Chunhui
and Kent, John",
title="IEEE-CIS Fraud Detection Based on XGB",
booktitle="Proceedings of the 7th International Conference on Economic Management and Green Development",
year="2024",
publisher="Springer Nature Singapore",
address="Singapore",
pages="1785--1796",
abstract="As a result of the world switching to using credit cards in place of cash due to the quick advancement of technology, fraud incidents have increased. Fraud deals with circumstances in which there is criminal intent, yet it is usually difficult to discern. Much research indicate that global losses based on credit card fraud will exceed {\$}35 billion by 2020. The credit card's providers or those financial banks should protect users from any fraud risk they might confront. As a result, this work provides a machine learning-based strategy to identify fraudulent transactions using data from the Kaggle-obtained IEEE-CIS Fraud Detection dataset. The model combines three most efficient ensemble models including Categorical Boost (CatBoost), Extreme Boost (XGBoost) and LightGBM (LGBM). Instead of training the model directly, this paper provides detailed data preprocessing and feature engineering methods in order to choose all the key variables and remove features having low correlation with the label. The results indicate that the final model introduced in this paper achieved best among all other models as getting 96.77{\%} score. The result in this paper benefits the related corporations in financial activities.",
}

@INPROCEEDINGS{frauddatasetwork3,
  author={Chen, Liiie and Guan, Qihan and Chen, Ning and YiHang, Zhou},
  booktitle={2021 2nd International Conference on Education, Knowledge and Information Management (ICEKIM)}, 
  title={A StackNet Based Model for Fraud Detection}, 
  year={2021},
  volume={},
  number={},
  pages={328-331},
  keywords={Training;Measurement;Forestry;Companies;Network architecture;Credit cards;Information management;Stacked;GradientBoosting;LightGBM;CatBoost;Random Forest;Fraud Detection},
  doi={10.1109/ICEKIM52309.2021.00079}}

@INPROCEEDINGS{frauddatasetwork5,
  author={B, Sreenivas Prasad and N, Akash Babu and Reddy, Harthikeswar and Singh, Rimjhim Padam and Kanchan, Sneha},
  booktitle={2024 IEEE Recent Advances in Intelligent Computational Systems (RAICS)}, 
  title={A Machine Learning Approach for Credit Card Fraud Detection in Massive Datasets Using SMOTE and Random Sampling}, 
  year={2024},
  volume={},
  number={},
  pages={1-8},
  keywords={Surveys;Accuracy;Systematics;Sensitivity;Stacking;Credit cards;Fraud;Ensemble learning;Surges;Business;Credit Card Fraud;Massive Dataset;Fraud Detection;Data re-sampling;SMOTE;Random Oversampling;Ensemble Learning},
  doi={10.1109/RAICS61201.2024.10690025}}

@INPROCEEDINGS{frauddatasetwork6,
  author={Deng, Wenkai and Huang, Ziming and Zhang, Jiachen and Xu, Junyan},
  booktitle={2021 IEEE International Conference on Consumer Electronics and Computer Engineering (ICCECE)}, 
  title={A Data Mining Based System For Transaction Fraud Detection}, 
  year={2021},
  volume={},
  number={},
  pages={542-545},
  keywords={Support vector machines;Machine learning algorithms;Computational modeling;Manuals;Data mining;Security;Random forests;Fraud detection;Data mining;Online Transaction},
  doi={10.1109/ICCECE51280.2021.9342376}}

@inproceedings{xgboost_paper,
author = {Chen, Tianqi and Guestrin, Carlos},
title = {XGBoost: A Scalable Tree Boosting System},
year = {2016},
publisher = {Association for Computing Machinery},
address = {New York, NY, USA},
doi = {10.1145/2939672.2939785},
abstract = {Tree boosting is a highly effective and widely used machine learning method. In this paper, we describe a scalable end-to-end tree boosting system called XGBoost, which is used widely by data scientists to achieve state-of-the-art results on many machine learning challenges. We propose a novel sparsity-aware algorithm for sparse data and weighted quantile sketch for approximate tree learning. More importantly, we provide insights on cache access patterns, data compression and sharding to build a scalable tree boosting system. By combining these insights, XGBoost scales beyond billions of examples using far fewer resources than existing systems.},
booktitle = {Proceedings of the 22nd ACM SIGKDD International Conference on Knowledge Discovery and Data Mining},
pages = {785–794},
numpages = {10},
keywords = {large-scale machine learning},
location = {San Francisco, California, USA},
series = {KDD '16}
}

@inproceedings{unified_approach_to_interpreting_models,
 author = {Lundberg, Scott M and Lee, Su-In},
 booktitle = {Advances in Neural Information Processing Systems},
 editor = {I. Guyon and U. Von Luxburg and S. Bengio and H. Wallach and R. Fergus and S. Vishwanathan and R. Garnett},
 pages = {},
 publisher = {Curran Associates, Inc.},
 title = {A Unified Approach to Interpreting Model Predictions},
volume = {30},
 year = {2017}
 
}

@misc{consistent_shap,
      title={Consistent Individualized Feature Attribution for Tree Ensembles}, 
      author={Scott M. Lundberg and Gabriel G. Erion and Su-In Lee},
      year={2019},
      eprint={1802.03888},
      archivePrefix={arXiv},
      primaryClass={cs.LG},
      url={https://arxiv.org/abs/1802.03888}, 
}

@Article{responsible_ml,
AUTHOR = {Gill, Navdeep and Hall, Patrick and Montgomery, Kim and Schmidt, Nicholas},
TITLE = {A Responsible Machine Learning Workflow with Focus on Interpretable Models, Post-hoc Explanation, and Discrimination Testing},
JOURNAL = {Information},
VOLUME = {11},
YEAR = {2020},
NUMBER = {3},
ARTICLE-NUMBER = {137},
ABSTRACT = {This manuscript outlines a viable approach for training and evaluating machine learning systems for high-stakes, human-centered, or regulated applications using common Python programming tools. The accuracy and intrinsic interpretability of two types of constrained models, monotonic gradient boosting machines and explainable neural networks, a deep learning architecture well-suited for structured data, are assessed on simulated data and publicly available mortgage data. For maximum transparency and the potential generation of personalized adverse action notices, the constrained models are analyzed using post-hoc explanation techniques including plots of partial dependence and individual conditional expectation and with global and local Shapley feature importance. The constrained model predictions are also tested for disparate impact and other types of discrimination using measures with long-standing legal precedents, adverse impact ratio, marginal effect, and standardized mean difference, along with straightforward group fairness measures. By combining interpretable models, post-hoc explanations, and discrimination testing with accessible software tools, this text aims to provide a template workflow for machine learning applications that require high accuracy and interpretability and that mitigate risks of discrimination.},
DOI = {10.3390/info11030137}
}

@article{onGDPRregulation,
    author = {Selbst, Andrew D and Powles, Julia},
    title = {Meaningful information and the right to explanation},
    journal = {International Data Privacy Law},
    volume = {7},
    number = {4},
    pages = {233-242},
    year = {2017},
    month = {12},
    doi = {10.1093/idpl/ipx022},
    eprint = {https://academic.oup.com/idpl/article-pdf/7/4/233/22923065/ipx022.pdf},
}

@book{ml_interpretability_book,
  title={An introduction to machine learning interpretability},
  author={Hall, Patrick and Gill, Navdeep},
  year={2019},
  publisher={O'Reilly Media, Incorporated}
}

@misc{true_model_true_data,
      title={True to the Model or True to the Data?}, 
      author={Hugh Chen and Joseph D. Janizek and Scott Lundberg and Su-In Lee},
      year={2020},
      eprint={2006.16234},
      archivePrefix={arXiv},
      primaryClass={cs.LG},
      url={https://arxiv.org/abs/2006.16234}, 
}

@article{random_forrest,
author={Breiman, Leo},
title={Random Forests},
journal={Machine Learning},
year={2001},
month={Oct},
day={01},
volume={45},
number={1},
pages={5-32},
abstract={Random forests are a combination of tree predictors such that each tree depends on the values of a random vector sampled independently and with the same distribution for all trees in the forest. The generalization error for forests converges a.s. to a limit as the number of trees in the forest becomes large. The generalization error of a forest of tree classifiers depends on the strength of the individual trees in the forest and the correlation between them. Using a random selection of features to split each node yields error rates that compare favorably to Adaboost (Y. Freund {\&} R. Schapire, Machine Learning: Proceedings of the Thirteenth International conference, ***, 148--156), but are more robust with respect to noise. Internal estimates monitor error, strength, and correlation and these are used to show the response to increasing the number of features used in the splitting. Internal estimates are also used to measure variable importance. These ideas are also applicable to regression.},
doi={10.1023/A:1010933404324}
}

@article{catboost,
  author       = {Anna Veronika Dorogush and
                  Andrey Gulin and
                  Gleb Gusev and
                  Nikita Kazeev and
                  Liudmila Ostroumova Prokhorenkova and
                  Aleksandr Vorobev},
  title        = {Fighting biases with dynamic boosting},
  journal      = {CoRR},
  volume       = {abs/1706.09516},
  year         = {2017},
  url          = {http://arxiv.org/abs/1706.09516},
  eprinttype    = {arXiv},
  eprint       = {1706.09516},
  timestamp    = {Mon, 13 Aug 2018 16:46:23 +0200},
}

@inproceedings{DBLP:conf/dna/ZhangJ05,
  author       = {Byoung{-}Tak Zhang and
                  Ha{-}Young Jang},
  editor       = {Alessandra Carbone and
                  Niles A. Pierce},
  title        = {Molecular Learning of wDNF Formulae},
  booktitle    = {DNA Computing, 11th International Workshop on {DNA} Computing, DNA11,
                  Revised Selected Papers},
  series       = {Lecture Notes in Computer Science},
  volume       = {3892},
  pages        = {427--437},
  publisher    = {Springer},
  year         = {2005},
  doi          = {10.1007/11753681\_34},
  timestamp    = {Tue, 14 May 2019 10:00:40 +0200}
}

@misc{understandingglobalfeaturecontributions,
      title={Understanding Global Feature Contributions With Additive Importance Measures}, 
      author={Ian Covert and Scott Lundberg and Su-In Lee},
      year={2020},
      eprint={2004.00668},
      archivePrefix={arXiv},
      primaryClass={cs.LG},
      url={https://arxiv.org/abs/2004.00668}, 
}

@article{recent_deep_learning_work_on_kdd,
author = {Agalit, Mohamed and Khamlichi, Youness},
year = {2024},
month = {08},
pages = {1029-1038},
title = {Optimization of Intrusion Detection with Deep Learning: A Study Based on the KDD Cup 99 Database},
volume = {14},
journal = {International Journal of Safety and Security Engineering},
doi = {10.18280/ijsse.140402}
}

@INPROCEEDINGS{kdd_cup_data_review,
  author={Tavallaee, Mahbod and Bagheri, Ebrahim and Lu, Wei and Ghorbani, Ali A.},
  booktitle={2009 IEEE Symposium on Computational Intelligence for Security and Defense Applications}, 
  title={A detailed analysis of the KDD CUP 99 data set}, 
  year={2009},
  volume={},
  number={},
  pages={1-6},
  keywords={Testing;Intrusion detection;Data security;Statistical analysis;Computer security;Computer aided manufacturing;Learning systems;Computational intelligence;Computer networks;Application software},
  doi={10.1109/CISDA.2009.5356528}}

@article{kdd_cup_winner,
author = {Pfahringer, Bernhard},
title = {Winning the KDD99 classification cup: bagged boosting},
year = {2000},
issue_date = {January 2000},
publisher = {Association for Computing Machinery},
address = {New York, NY, USA},
volume = {1},
number = {2},
doi = {10.1145/846183.846200},
abstract = {We briefly describe our approach for the KDD99 Classification Cup. The solution is essentially a mixture of bagging and boosting. Additionally, asymmetric error costs are taken into account by minimizing the so-called conditional risk. Furthermore, the standard sampling with replacement methodology of bagging was modified to put a specific focus on the smaller but expensive-if-predicted-wrongly classes.},
journal = {SIGKDD Explor. Newsl.},
month = jan,
pages = {65–66},
numpages = {2},
keywords = {sampling, cost-sensitive, boosting, bagging}
}

@inproceedings{complexity_of_SHAP,
author = {Huang, Xuanxiang and Marques-Silva, Joao},
title = {Updates on the complexity of SHAP scores},
year = {2024},
doi = {10.24963/ijcai.2024/45},
abstract = {SHAP scores represent one of the most widely used methods of explainability by feature attribution, as illustrated by the explainable AI tool SHAP. A number of recent works studied the computational complexity of the exact computation of SHAP scores, covering a comprehensive range of families of classifiers. This paper refines some of the existing complexity claims, including families of classifiers for which the computation of SHAP scores is computationally hard and those for which there exist polynomial-time algorithms.},
booktitle = {Proceedings of the Thirty-Third International Joint Conference on Artificial Intelligence},
articleno = {45},
numpages = {10},
location = {Jeju, Korea},
series = {IJCAI '24}
}

@misc{tractability_shap_explanations,
      title={On the Tractability of SHAP Explanations}, 
      author={Guy Van den Broeck and Anton Lykov and Maximilian Schleich and Dan Suciu},
      year={2021},
      eprint={2009.08634},
      archivePrefix={arXiv},
      primaryClass={cs.AI},
      url={https://arxiv.org/abs/2009.08634}, 
}

@misc{shap_issue_background_shap_iv,
  author       = {Michal Porwisz},
  title        = {What would be needed to implement interaction values for other Explainers? For which one would it be most straightforward?},
  howpublished = {\url{https://github.com/shap/shap/issues/777}},
  note         = {GitHub Issue \#777},
  year         = 2019
}

@misc{shap_issue_background_100,
  author       = {Alberto Bracci},
  title        = {background data size limited to 100, xgboost},
  howpublished = {\url{https://github.com/shap/shap/issues/1991}},
  note         = {GitHub Issue \#1991},
  year         = 2021
}

@misc{shapiq,
      title={shapiq: Shapley Interactions for Machine Learning}, 
      author={Maximilian Muschalik and Hubert Baniecki and Fabian Fumagalli and Patrick Kolpaczki and Barbara Hammer and Eyke Hüllermeier},
      year={2024},
      eprint={2410.01649},
      archivePrefix={arXiv},
      primaryClass={cs.LG},
      url={https://arxiv.org/abs/2410.01649}, 
}

@mastersthesis{silva2021maxsat,
  author    = {Pedro Filipe Medeiros da Silva},
  title     = {Max-SAT Algorithms For Real World Instances},
  school    = {Instituto Superior Técnico, Universidade de Lisboa},
  year      = {2021}
}

@article{linear_background_shap, 
    title={Interventional SHAP Values and Interaction Values for Piecewise Linear Regression Trees}, 
    volume={37}, 
    url = {https://ojs.aaai.org/index.php/AAAI/article/view/26322}, 
    DOI={10.1609/aaai.v37i9.26322}, 
    abstractNote={In recent years, game-theoretic Shapley values have gained increasing attention with respect to local model explanation by feature attributions. While the approach using Shapley values is model-independent, their (exact) computation is usually intractable, so efficient model-specific algorithms have been devised including approaches for decision trees or their ensembles in general. Our work goes further in this direction by extending the interventional TreeSHAP algorithm to piecewise linear regression trees, which gained more attention in the past few years. To this end, we introduce a decomposition of the contribution function based on decision paths, which allows a more comprehensible formulation of SHAP algorithms for tree-based models. Our algorithm can also be readily applied to computing SHAP interaction values of these models. In particular, as the main contribution of this paper, we provide a more efficient approach of interventional SHAP for tree-based models by precomputing statistics of the background data based on the tree structure.}, 
    number={9}, 
    journal={Proceedings of the AAAI Conference on Artificial Intelligence}, 
    author={Zern, Artjom and Broelemann, Klaus and Kasneci, Gjergji}, 
    year={2023}, 
    month={Jun.}, 
    pages={11164-11173} 
}

@misc{fast_tree_shap,
      title={Fast TreeSHAP: Accelerating SHAP Value Computation for Trees}, 
      author={Jilei Yang},
      year={2022},
      eprint={2109.09847},
      archivePrefix={arXiv},
      primaryClass={cs.LG},
      url={https://arxiv.org/abs/2109.09847}, 
}

@misc{shap_fourier_gpu,
      title={SHAP values via sparse Fourier representation}, 
      author={Ali Gorji and Andisheh Amrollahi and Andreas Krause},
      year={2025},
      eprint={2410.06300},
      archivePrefix={arXiv},
      primaryClass={cs.LG},
      url={https://arxiv.org/abs/2410.06300}, 
}

@misc{GPUTreeShap,
      title={GPUTreeShap: Massively Parallel Exact Calculation of SHAP Scores for Tree Ensembles}, 
      author={Rory Mitchell and Eibe Frank and Geoffrey Holmes},
      year={2022},
      eprint={2010.13972},
      archivePrefix={arXiv},
      primaryClass={cs.LG},
      url={https://arxiv.org/abs/2010.13972}, 
}

@misc{understandinginterventionaltreeshap,
      title={Understanding Interventional TreeSHAP : How and Why it Works}, 
      author={Gabriel Laberge and Yann Pequignot},
      year={2022},
      eprint={2209.15123},
      archivePrefix={arXiv},
      primaryClass={cs.LG},
      url={https://arxiv.org/abs/2209.15123}, 
}

@article{NPO,
title = "Null players out? Linear values for games with variable supports",
abstract = "The paper studies the consequences of the Null Player Out (NPO) property for single-valued solutions on the class of cooperative games in characteristic function form. We allow for variable player populations (supports or carriers). A solution satisfies the NPO property, if elimination of a null player does not affect the payoffs of the other players. Our main emphasis lies on individual values. For linear values satisfying the null player property and a weak symmetry property, necessary and sufficient conditions for the NPO property are derived.",
author = "J. Derks and H. Haller",
year = "1999",
doi = "10.1142/S0219198999000220",
language = "English",
volume = "1",
pages = "301--314",
journal = "International Game Theory Review",
issn = "0219-1989",
publisher = "World Scientific Publishing Company",
}

@misc{shap_gputree_doc,
  author       = {Scott Lundberg and SHAP contributors},
  title        = {SHAP GPUTree Explainer Example},
  year         = {2024},
  howpublished = {\url{https://shap.readthedocs.io/en/latest/example_notebooks/api_examples/explainers/GPUTree.html}},
}

\appendix

\section{Background SHAP is the Most Accurate}
\label{sec:background_shap_is_the_most_accurate}
We now examine the example in Fig.~\ref{fig_simple_tree} to illustrate why Background SHAP is the most accurate approach. Details on the three common approaches analyzed in this appendix are provided in Sect.~\ref{sec:feature_importance_using_shap}. Suppose the decision tree $M$ in Fig.~\ref{fig_simple_tree} was trained on the dataset $X$.

\begin{figure}[h!]
\centering
\includegraphics[width=0.9\columnwidth]{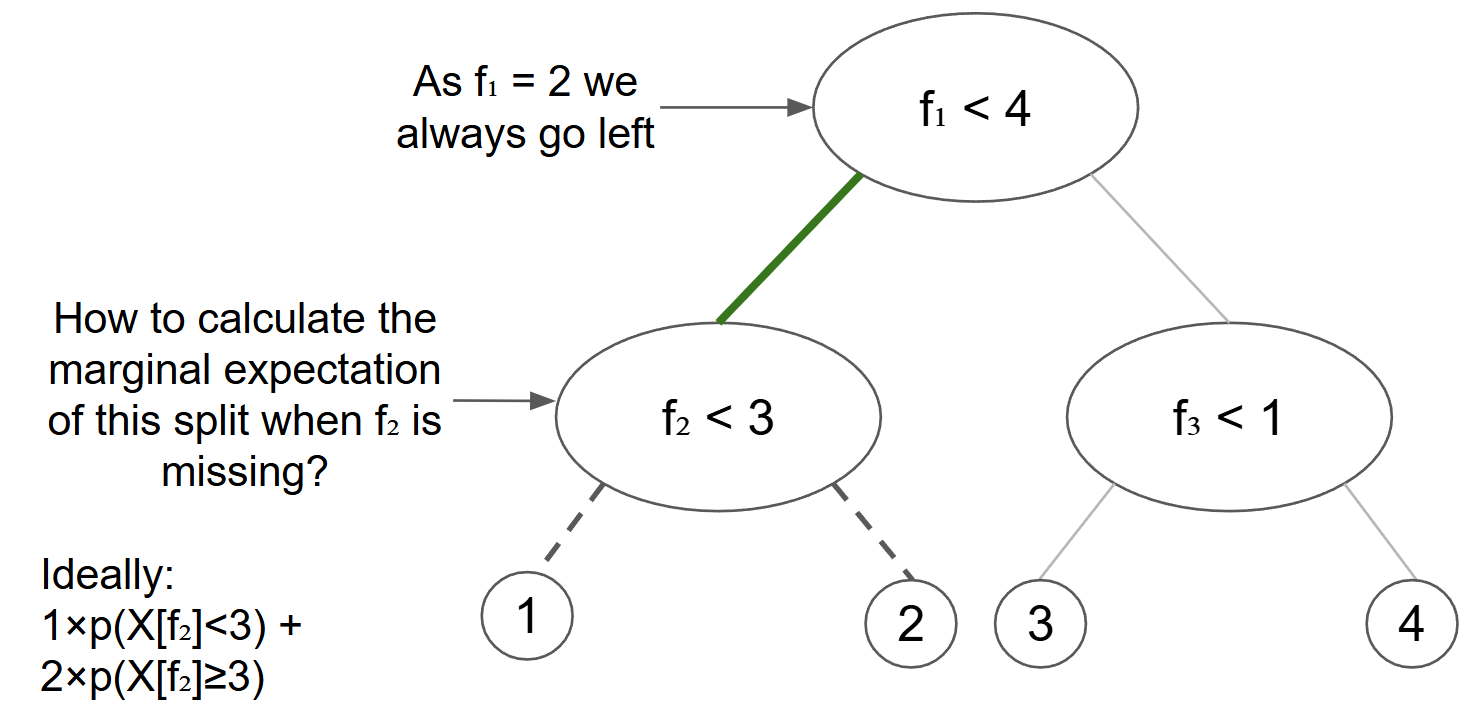}
\caption{A simple decision tree illustrating the excepted prediction computation, given $f_1 = 2$ and $f_2$ is missing.}
\label{fig_simple_tree}
\end{figure}

As part of a characteristic function definition, we seek to determine the model’s prediction when $f_1=2$, $f_3=5$ and $f_2$ is missing. Ideally, this prediction should be the marginal expectation $\mathbb{E}[M(f_1,f_2,f_3) \mid do(f_1=2, f_3=5)]$ (For more on marginal vs. conditional SHAP, see \cite{true_model_true_data}). 

At the split $f_1 < 4$, we go left since $f_1=2$. The question is how to handle the subsequent split $f_2 < 3$ when $f_2$ is missing. 

To proceed, we must estimate the probability that the condition holds, denoted $p(f_2 < 3)$.
We then use this probability to compute the expected prediction at this node: 
$\mathbb{E}[M(f_1,f_2,f_3) \mid do(f_1=2, f_3=5)] = 1 \cdot p(f_2 < 3) + 2 \cdot (1 -p(f_2 < 3)) $.

\emph{Baseline SHAP} can not estimate the full probability as it only has a single baseline value for $f_2$. \emph{Path-Dependent SHAP} estimates this probability using the cover property, which is based on the training data that reached the split node in the training process. Since only rows with $X[f_1] < 4$ reached this node, the estimate is conditioned on $X[f_1] < 4$, leading to potential bias when $f_1$ and $f_2$ are correlated. \emph{Background SHAP} can use the full training set to compute this probability accurately. 

\section{Shapley and Banzhaf values on WDNF and WCNF - With Correctness Proofs}
\label{sec:shap_on_WDNF_appendix}

This appendix provides a deeper analysis of the Shapley and Banzhaf definitions for pseudo-Boolean functions and their computation over WDNF and WCNF formulas.
We formally prove the correctness of Formulas~\ref{shap_simplified_formula} and~\ref{banzhaf_simplified_formula}, and derive their asymptotic computational complexity.

\subsection{Shapley and Banzhaf values}

\paragraph{Shapley values} are defined by Formula~\ref{eq:s}. They are known to be the unique method that satisfies four desirable properties \cite{original_shapley_paper}:

\begin{enumerate}
    \item \textbf{Efficiency}: Summing the Shapley values for all players equals the characteristic function $M$ output when all the features exist ($M(N)$) minus the output when all are missing ($M(\{\})$). Given a group of N features: $\sum^{N}_{i=1}\phi_i(M) = M(N) - M(\{\})$
    \item \textbf{Null Player}: A player who does not influence any coalition gets a Shapley value of zero.
    \item \textbf{Symmetry} (or equal treatment of equals): If players i and j contribute equally to any subset, their Shapley values are equal.
    \item \textbf{Linearity}: The Shapley value is linear, meaning that for any two characteristic functions $M_1$ and $M_2$, the Shapley value of their sum equals the sum of their Shapley values. That is, for a player $i$ and $M = M_1 + M_2$, we have:
    $\phi_{i}(M) = \phi_i(M_1) + \phi_i(M_2)$
\end{enumerate}

\paragraph{Banzhaf values} \cite{banzhaf1965} originate from voting theory. Their formula is similar to the Shapley values formula but gives the same weight to all subsets:
\begin{equation} \label{banzhaf_combinatorial_def}
\beta_i(M) = \sum\limits_{S \subseteq N\setminus \{i\}} \frac{1}{2^{(|N|-1)}} (M(S\cup\{i\}) - M(S))
\end{equation}

See Sects.~\ref{sec:shap_intro} and~\ref{sec:more_than_shap} for additional background on the Shapley and Banzhaf values.
Both Banzhaf and Shapley are computed over the game's characteristic function $M$. We have demonstrated how a pseudo-Boolean (PB) function $F$ can be treated as a game's characteristic function (see Def.~\ref{def:PB_characteristic_function}). Below are the Shapley and Banzhaf values formulas for a variable (player) in a PB function, derived by applying our characteristic function definition to the standard Shapley and Banzhaf value formulas. Here, $x_{-i} = \{x_1, \dots, x_h\} \setminus \{x_i\}$ (the set of all variables excluding $x_i$), $h$ is the total number of variables, and $\sum_{x_j \in x_{-i}} x_j$ is the number of participating players in $x_{-i}$. 

The Shapley values formula is:

\begin{equation}
\begin{aligned}
\phi_i(F) = \textstyle\sum\limits_{x_{-i} \in \{0,1\}^{h-1}} \frac{(\sum\limits_{x_j \in x_{-i}}x_j)!(h - (\sum\limits_{x_j \in x_{-i}}x_j) - 1)!}{h!} \times \Delta F, \\
\Delta F = F(x_{-i}, x_i=1) - F(x_{-i}, x_i=0).
\end{aligned}
\label{PB_shap}
\end{equation}

The Banzhaf values formula is:

\begin{equation}
\begin{aligned}
\beta_i(F) = \textstyle\sum\limits_{x_{-i} \in \{0,1\}^{h-1}} \frac{1}{2^{h-1}} \times \Delta F, \\
\Delta F = F(x_{-i}, x_i=1) - F(x_{-i}, x_i=0).
\end{aligned}
\label{PB_banzhaf}
\end{equation}

Direct calculation of these formulas is computationally expensive due to the exponential size of $\{0,1\}^{h-1}$. In this section, we show that this problem is \#P-hard in general. A key insight of our research is that Shapley and Banzhaf values for PB functions, represented in the well-known normal forms WDNF and WCNF, can be computed in linear time.

\subsection{WCNF}

The definition of Weighted Conjunctive Normal (WCNF) Form is very similar to the WDNF definition (see Def~\ref{def:WDNF}). A \emph{clause} is a disjunction of literals. For a clause $c_k$, we denote by $S_k$ the set of variables in $c_k$, partitioned into positive variables $S^+_k$ and negated variables $S^-_k$. For example, in the clause $c_k \equiv x_1 \lor \lnot x_2 \lor x_3$, we have $S^+_k = \{x_1, x_3\}$, $S^-_k = \{x_2\}$ and $S_k = S^+_k \cup S^-_k$. 

\begin{definition}[Weighted Conjunctive Normal Form (WCNF)~\cite{silva2021maxsat}]
\label{def:WCNF}
\emph{WCNF} formulas are pseudo-Boolean functions of the form:

\[
F(x_1, \dots,x_h) = \sum_{k=1}^{m} w_k \cdot c_k(x_1, \dots,x_h)
\]

Where each $c_k$ is a \textbf{clause} and $w_k \in \mathbb{R}$ is its weight. 
\end{definition}

For example, the PB function $F(x_1, x_2, x_3) = 3(\neg x_1) + 1(\neg x_1 \lor x_2) + 5(x_1 \lor \neg x_2 \lor x_3)$ is in WCNF. Assigning $x_1=0$, $x_2=1$, and $x_3=1$ results in $F(0,1,1) = 3 + 1 + 5 = 9$.

\subsection{Linear Time Shapley and Banzhaf Formulas}

Shapley and Banzhaf values can be computed in linear time on WDNF and WCNF representations using Formulas~\ref{shap_simplified_formula} and~\ref{banzhaf_simplified_formula}, respectively. Before applying these formulas, remove any cubes or clauses that contain both a variable and its negation — that is, remove any cube or clause where $|S^+_k \cap S^-_k| \ge 1$. This trivial preprocessing step is correct because such cubes are always false, and such clauses are always true. These cubes/clauses are also removed before the interaction values computation.

See tables \ref{banzhaf_calc_example} and \ref{shapley_calc_example} for Banzhaf and Shapley calculation examples:

\begin{table}[h]
\centering
\renewcommand{\arraystretch}{1.5}
\resizebox{\columnwidth}{!}{
\begin{tabular}{l|c|c|c|c}
 & $3(\neg x1) $ & $5(x_1 \land \neg x_3)$ & $2(x_2 \land x_3 \land \neg x_1)$ & total value \\\hline
$\beta_1$ (Banzhaf for $x_1$)  & $\frac{-3}{2^0}=-3$ & $\frac{5}{2^1}=2.5$ & $\frac{-2}{2^2}=-0.5$ & -1 \\
$\beta_2$ (Banzhaf for $x_2$)  & 0 & 0 & $\frac{2}{2^2}=0.5$ & 0.5 \\
$\beta_3$ (Banzhaf for $x_3$)  & 0 & $\frac{-5}{2^1}=-2.5$ & $\frac{2}{2^2}=0.5$ & -2 \\
\end{tabular}}
\caption{\label{banzhaf_calc_example}Detailed Banzhaf values calculation on $\psi = 3(\neg x1)+5(x_1 \land \neg x_3)+2(x_2 \land x_3 \land \neg x_1)$. }
\end{table}

\begin{table}[h]
\centering
\renewcommand{\arraystretch}{1.5}
\resizebox{\columnwidth}{!}{
\begin{tabular}{l|c|c|c|c}
 & $3(\neg x_1) $ & $5(x_1 \land \neg x_3)$ & $2(x_2 \land x_3 \land \neg x_1)$ & total value \\\hline
$\phi_1$ (Shapley for $x_1$)  & $\frac{-3}{1\binom{1}{1}}=-3$ & $\frac{5}{1\binom{2}{1}}=2.5$ & $\frac{-2}{1\binom{3}{1}}=-0.667$ & -1.167 \\
$\phi_2$ (Shapley for $x_2$)  & 0 & 0 & $\frac{2}{2\binom{3}{2}}=0.333$ & 0.333 \\
$\phi_3$ (Shapley for $x_3$)  & 0 & $\frac{-5}{1\binom{2}{1}}=-2.5$ & $\frac{2}{2\binom{3}{2}}=0.333$ & -2.167 \\
\end{tabular}}
\caption{\label{shapley_calc_example}Detailed Shapley values calculation on $\psi$ (as defined in Table \ref{banzhaf_calc_example}). Observe that the efficiency property holds as the sum of the Shapley values is $-3$ and: $\psi(x_1=x_2=x_3=1) - \psi(x_1=x_2=x_3=0) = -3$  }
\end{table}

In the following subsections, we will prove that these formulas correctly compute the Shapley and Banzhaf values, and demonstrate how to use them to calculate the Shapley/Banzhaf value for all variables in linear time.

\subsection{Correctness Proofs}
\label{sec:shap_banzhaf_proof}

We will mark the linear-time formulas as ``simplified'' and the exponential time formulas as ``original''. Mark Formula~\ref{PB_shap} as $\so$, Formula~\ref{PB_banzhaf} as $\bo$, Formula~\ref{shap_simplified_formula} as $\shapsimp$, and Formula~\ref{banzhaf_simplified_formula} as $\bs$.

\begin{theorem}\label{simplified_banzhaf}
For any player $i$, the simplified Banzhaf formula $\bs$ is equal to the original formula $\bo$ when the input is in WDNF.
\end{theorem}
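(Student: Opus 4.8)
The plan is to exploit the linearity property of the Banzhaf value together with the fact that a WDNF formula is, by definition, a weighted sum of cubes. Writing $F = \sum_{k=1}^m w_k c_k$ and applying linearity (one of the three Shapley properties that the Banzhaf value retains) gives $\bo = \sum_{k=1}^m w_k \, \beta_i(c_k)$, where $\beta_i(c_k)$ is the original Banzhaf value of the single-cube game $c_k$. Since the simplified formula $\bs$ is itself a sum over cubes of $w_k$ times a per-cube case term, it suffices to show that for every individual cube $c_k$ the original value $\beta_i(c_k)$ equals the corresponding per-cube term $\frac{1}{2^{|S_k|-1}}\times\{+1,-1,0\}$ from Formula~\ref{banzhaf_simplified_formula}.

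To do so I would view a cube as an indicator function: after the preprocessing step removes cubes with $S^+_k \cap S^-_k \neq \emptyset$, we have $|S_k| = |S^+_k| + |S^-_k|$, and $c_k$ evaluates to $1$ on exactly those assignments that set every variable of $S^+_k$ to $1$ and every variable of $S^-_k$ to $0$, and to $0$ otherwise. I would then substitute this into Formula~\ref{PB_banzhaf}, summing $\frac{1}{2^{h-1}}\,\Delta c_k$ over all $x_{-i} \in \{0,1\}^{h-1}$ with $\Delta c_k = c_k(x_{-i}, x_i=1) - c_k(x_{-i}, x_i=0)$, and proceed by cases on the position of $i$.

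The three cases match the three branches of the formula. If $i \notin S_k$, the cube does not depend on $x_i$, so $\Delta c_k = 0$ identically and $\beta_i(c_k)=0$ (this is just the null-player property). If $i \in S^+_k$, then $c_k(x_{-i}, x_i=0)=0$ always, so $\Delta c_k = c_k(x_{-i}, x_i=1)$, which equals $1$ exactly when the remaining $|S_k|-1$ variables of $S_k$ take their required values while the other $h - |S_k|$ variables range freely; counting gives $2^{h-|S_k|}$ contributing terms, so $\beta_i(c_k) = 2^{h-|S_k|}/2^{h-1} = 1/2^{|S_k|-1}$. The case $i \in S^-_k$ is symmetric: now $c_k(x_{-i}, x_i=1)=0$, so $\Delta c_k = -c_k(x_{-i}, x_i=0)$, the same $2^{h-|S_k|}$ assignments each contribute $-1$, and $\beta_i(c_k) = -1/2^{|S_k|-1}$. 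Summing $w_k\beta_i(c_k)$ over $k$ then reproduces $\bs$ exactly, completing the proof.

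I expect the only real subtlety to be the combinatorial count in the two nontrivial cases: specifically, making sure the ambient dimension $h$ cancels correctly (the $2^{h-|S_k|}$ free-variable factor against the $2^{h-1}$ normalization), and noting that the preprocessing assumption $S^+_k \cap S^-_k = \emptyset$ is precisely what guarantees $|S_k| = |S^+_k| + |S^-_k|$, so that the number of constrained variables is indeed $|S_k|-1$. Everything else is routine bookkeeping.
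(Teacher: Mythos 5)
Your proposal is correct and follows essentially the same route as the paper's proof: linearity reduces the claim to a single weighted cube, the case $i \notin S_k$ is dispatched by the null-player property, and in the two remaining cases one substitutes the identically-zero branch of $\Delta c_k$ and counts the $2^{h-|S_k|}$ satisfying assignments (the paper writes this count as $2^{h-1-(|S_k|-1)}$, which is the same number) against the $2^{h-1}$ normalization, using the preprocessing guarantee $S^+_k \cap S^-_k = \emptyset$ exactly as you do. No gaps; the bookkeeping matches the paper's argument step for step.
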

\begin{proof}
Due to the linearity property, Banzhaf values can be calculated by computing them for each cube independently and summing the results. Given a weighted cube $c_k$ and a player $i$ we will show $\bs(c_k) = \bo(c_k)$. 

We will consider three scenarios: 
\begin{enumerate}
    \item $i \notin S_k$: From null player property: $\bo(c_k)=0$, and due to the third case in Formula~\ref{banzhaf_simplified_formula}: $\bs(c_k) = 0$.
    \item $i \in S^+_k$: As the literal $x_i$ is included in the cube, any assignment with $x_i=0$ will leave the cube unsatisfied: 
    
    $\forall x_{-i}: c_k(x_{-i},\:  x_i=0)=0$. 
    
    This property allows us to simplify $\bo$ (Formula~\ref{PB_banzhaf}) by directly substituting $c_k(x_{-i},\:  x_i=0)=0$: 

    \begin{equation*}
        \bo(c_k) = \frac{1}{2^{h-1}}\sum\limits_{x_{-i} \in \{0,1\}^{h-1}}c_k(x_{-i},\:  x_i=1) - 0
    \end{equation*}
    
    Next, we will compute the summation by counting the number of satisfying assignments. The cube is satisfiable because we have already removed all cubes $c_j$ s.t. $|S^+(c_j)\cap S^-(c_j)| \ge 1$. There are $h-1$ variables (ignoring $x_i$ as we already assigned it to 1, see Defs.~\ref{def:WDNF} and~\ref{def:WCNF}), $|S_k|-1$ of them appear in $c_k$. To satisfy $c_k$, we must correctly assign all its variables. The remaining variables can have any assignment. Therefore, $c_k(x_{-i},\:  x_i=1)$ has $2^{h-1-(|S_k|-1)}$ satisfying assignments, each contributes a weight of $w_k$. 
    \[
    \bo(c_k) = \frac{1}{2^{h-1}} \cdot w_k \cdot 2^{h-1-(|S_k|-1)} =  \frac{w_k}{ 2^{|S_k|-1}}
    \]
    \item $i \in S^-_k$: Now any assignment with $x_i=1$ will leave the cube unsatisfied. We will use the same technique as above, simplifying the original formula and counting the number of satisfying assignments. \newline
    \[
    \bo(c_k) = \frac{1}{2^{h-1}}\sum\limits_{x_{-i} \in \{0,1\}^{h-1}}0-c_k(x_{-i},\:  x_i=0) = 
    \]
    \[
    \frac{1}{2^{h-1}} \cdot (-w_k) \cdot 2^{h-1-(|S_k|-1)} = \frac{-w_k}{ 2^{|S_k|-1}}
    \]
\end{enumerate}

\end{proof}

\begin{theorem}\label{simplified_shapely}
For any player $i$, the simplified Shapley formula $\shapsimp$ is equal to the original formula $\so$ when the input is in WDNF.
\end{theorem}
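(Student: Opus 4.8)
The plan is to mirror the proof of Theorem~\ref{simplified_banzhaf}. By the linearity property of the Shapley value, it suffices to establish $\shapsimp(c_k) = \so(c_k)$ for a single weighted cube $w_k c_k$ and then sum over all cubes of the WDNF. So I fix one cube with variable set $S_k = S^+_k \cup S^-_k$, and (as in the Banzhaf proof) I assume the preprocessing step has removed contradictory cubes, so that $c_k$ is satisfiable.

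The engine of the argument is the null player out (NPO) property. Any variable $x_j \notin S_k$ does not change the value of $c_k$ in any coalition, so it is a null player of the game induced by $c_k$; by NPO I may delete all such variables without affecting the Shapley value of any $x_i \in S_k$. This collapses the exponential sum over $\{0,1\}^{h-1}$ in $\so$ (Formula~\ref{PB_shap}) to a single pivotal coalition, since over the reduced player set $S_k$ the cube has a unique satisfying assignment. I then split into three cases exactly as in Theorem~\ref{simplified_banzhaf}. If $i \notin S_k$, then $i$ is itself a null player and both formulas yield $0$, matching the last branch of Formula~\ref{shap_simplified_formula}. If $i \in S^+_k$, the cube is satisfied only when $x_i=1$ together with the other $|S^+_k|-1$ positives set to $1$ and all $|S^-_k|$ negatives set to $0$; the pivotal coalition therefore has size $|S^+_k|-1$ among the remaining $|S_k|$ players, so $\so(c_k) = w_k \cdot \frac{(|S^+_k|-1)!\,|S^-_k|!}{|S_k|!}$. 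The case $i \in S^-_k$ is symmetric: the pivotal coalition has size $|S^+_k|$ and contributes with a negative sign, giving $\so(c_k) = -w_k \cdot \frac{|S^+_k|!\,(|S^-_k|-1)!}{|S_k|!}$.

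A short, routine rewriting then matches these to Formula~\ref{shap_simplified_formula}. Using $|S^+_k|\binom{|S_k|}{|S^+_k|} = \frac{|S_k|!}{(|S^+_k|-1)!\,|S^-_k|!}$ turns the positive case into $\frac{w_k}{|S^+_k|\binom{|S_k|}{|S^+_k|}}$, and the analogous identity for $S^-_k$ turns the negative case into $\frac{-w_k}{|S^-_k|\binom{|S_k|}{|S^-_k|}}$, as required.

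I expect the crux to be the clean justification that each $x_j \notin S_k$ is genuinely a null player of the single-cube game and that NPO may be applied term-by-term after the linearity reduction; this is exactly what lets me avoid summing over all of $\{0,1\}^{h-1}$. Without NPO, the direct route would instead require proving the combinatorial identity
\[
\frac{1}{h!}\sum_{t=0}^{h-|S_k|}\binom{h-|S_k|}{t}(|S^+_k|-1+t)!\,(|S^-_k|+h-|S_k|-t)! = \frac{(|S^+_k|-1)!\,|S^-_k|!}{|S_k|!},
\]
which is cleanest via the Beta-integral representation $\frac{(a-1+t)!\,(b+q-t)!}{h!} = \int_0^1 x^{a-1+t}(1-x)^{b+q-t}\,dx$ (with $a=|S^+_k|$, $b=|S^-_k|$, $q=h-|S_k|$) followed by the binomial theorem. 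The NPO argument is preferable precisely because it sidesteps this identity altogether.
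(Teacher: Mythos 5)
Your proof is correct and takes essentially the same route as the paper's: linearity reduces the claim to a single weighted cube, the NPO property removes all variables outside $S_k$, and the three-case analysis ($i \notin S_k$, $i \in S^+_k$, $i \in S^-_k$) collapses the Shapley sum to the unique satisfying assignment (your ``pivotal coalition''), finishing with the same factorial-to-binomial rewriting. Your closing aside on the Beta-integral identity is a valid sanity check of the NPO-free route, but the paper, like you, invokes NPO precisely to sidestep it.
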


\begin{proof}
Using the linearity property, it is sufficient to show that for a weighted cube $c_k$ and a player $i$, we have $\shapsimp(c_k) = \so(c_k)$.

This proof utilizes the Shapley value's \emph{null player out (NPO)} property. The \emph{null player} property guarantees that players that have zero contribution will get Shapley value of zero. \emph{Null player out} considers the case when null players are removed completely from the game. 

By combining the \emph{null player} property with the \emph{efficiency} property, we can deduce that removing null players from the game does not affect the sum of the Shapley values of the remaining players. The question then becomes whether the removal of a null player can redistribute the Shapley values, maintaining their sum but altering how they are distributed among the remaining players. The \emph{null player out} (NPO) property assures that this will not occur. The NPO property guarantees that the Shapley values of the other players remain unchanged when null players are removed or added. For a simple proof of this property, see Lemma 2 in \cite{understandinginterventionaltreeshap}, and for an in-depth review, refer to \cite{NPO}.

In our proof, we will remove all null players, observe that after removal, only one assignment satisfies $c_k$, and use this observation to derive $\shapsimp$ from $\so$. We consider three separate cases:

\begin{enumerate}
    \item $i \notin S_k$: From null player property and applying the $\shapsimp$ formula: $\so(c_k)=0=\shapsimp(c_k)$.
    \item $i \in S^+_k$: As the literal $x_i$ is included in the cube, any assignment with $x_i=0$ will leave the cube unsatisfied. We can use this to simplify the formula $\so$ by directly substituting $c_k(x_{-i},\:  x_i=0)=0$:  

    \begin{equation}
    \begin{split}
    \so(w_k \cdot c_k) = \\
    \sum\limits_{x_{-i} \in \{0,1\}^{h-1}}
     \frac{(\sum\limits_{x_j \in x_{-i}}x_j)!(h - (\sum\limits_{x_j \in x_{-i}}x_j) - 1)!}{h!} \cdot \Delta c, \\
    \Delta c = (w_k \cdot c_k(x_{-i},\:  x_i=1)- 0)
    \label{shap_on_cube_i_positive}
    \end{split}
    \end{equation}

    The cube is satisfiable as we already removed all cubes where $|S^+(c_j)\cap S^-(c_j)| \ge 1$. Since all variables $x_j \notin S_k$ are null players, the NPO property allows us to remove them without changing $\so(w_k \cdot c_k)$. After removing them, we are left with only a single assignment that satisfies the cube. We will mark this assignment, excluding $x_i$, as $a$. 
    The assignment is:

    \begin{equation*}
    a = 
    \begin{cases}
    1 & \text{if } x_j \in S^+_k \\
    0 & \text{if } x_j \in S^-_k
    \end{cases}
    \;\; | \;\; \forall x_j \in S_k, j \neq i
    \}
    \end{equation*}

    Observe that $|a| = |S_k| - 1$ as $a$ includes all the cube's variables excluding $x_i$. We will mark:
    \[
    a = \{a_1, a_2, \dots , a_{|S_k|-1}\}.
    \]

   Considering this, we can simplify Formula~\ref{shap_on_cube_i_positive} above. Instead of going over all assignments, we only need to consider the assignment $a \cup \{x_i = 1\}$, as this is the only one that satisfies the cube:

    \begin{equation*}
    \begin{split}
    \so(c_k) = \\
    \frac{(\sum_{j=1}^{|S_k|-1}a_j)!(|S_k| - (\sum_{j=1}^{|S_k|-1}a_j) - 1)!}{|S_k|!}w_k \stackrel{(a)}= \\
    \frac{(|S^+_k|-1)!(|S_k| - |S^+_k|)!}{|S_k|!}w_k \stackrel{(b)}= \\
    \frac{w_k}{|S^+_k|\binom{|S_k|}{|S^+_k|}} \stackrel{(c)}=     \shapsimp(c_k)
    \end{split}
    \end{equation*}
    \begin{enumerate}
        \item Equation~\ref{shap_on_cube_i_positive} when only considering the single satisfying assignment: 
        \begin{enumerate}
            \item We remove the $\sum_{x_{-i} \in \{0,1\}^{h-1}}$ as there is only one satisfying assignment.
            \item Replace $h$ with $|S_k|$ (the size of $a \cup \{x_i=1\}$, as we removed all null players).
            \item Replace $\sum_{x_j \in x_{-i}}x_j$ with $\sum_{j=1}^{|S_k|-1}a_j$ as we consider the case where $a=\{a_1, \dots , a_{|S_k|-1}\} =x_{-i}$.
            \item Replace $c_k(x_{-i},\:  x_i=1)$ with $w_k$ as the assignment satisfies the cube.
        \end{enumerate}
       \item $\sum_{j=1}^{|S_k|-1}a_j = |S^+_k| - 1$ as $a$ includes $|S^+_k| - 1$ variables that are set to 1. See the definition of $a$ above.

        \item Simple arithmetic.
    \end{enumerate}
    
   \item $i \in S^-_k$: Now, any assignment with $x_i = 1$ will leave the cube unsatisfied. We will use the same technique as above, simplifying the original formula, removing null players, observing that after the removal there is only one satisfying assignment, and simplifying the formula even further using this observation:

    \resizebox{\linewidth}{!}{$
    \begin{aligned}
    \so(c_k) = \\
    \sum\limits_{x_{-i} \in \{0,1\}^{h-1}} \frac{(\sum\limits_{x_j \in x_{-i}}x_j)!(h - (\sum\limits_{x_j \in x_{-i}}x_j) - 1)!}{h!}(0 - c_k(x_{-i},\:  x_i=0)) &\stackrel{(a)}= \\
    \frac{(\sum_{j=1}^{|S_k|-1}a_j)!(|S_k| - (\sum_{j=1}^{|S_k|-1}a_j) - 1)!}{|S_k|!}(-w_k) &\stackrel{(b)}= \\
    \frac{(|S^+_k|)!(|S_k| - |S^+_k| - 1)!}{|S_k|!}(-w_k) &\stackrel{(c)}= \\
    \frac{(|S_k| - |S^-_k|)!(|S^-_k| - 1)!}{|S_k|!}(-w_k) &\stackrel{(d)}= \\
    \frac{-w_k}{|S^-_k|\binom{|S_k|}{|S^-_k|}} &\stackrel{(e)}= \\ \shapsimp(c_k)
    \end{aligned}
    $}
    \begin{enumerate}
        \item As $x_i\in S^-_k$, any assignment with $x_i=1$ will leave the cube unsatisfied. 
        \item Only considering the single satisfying assignment (with $a$ defined as in the $x_i \in S^+_k$ case).
        \item Now when $i \notin S^+_k$, the sum $(\sum_{j=1}^{|S_k|-1}a_j)$ for the satisfying assignment is $|S^+_k|$ and not $|S^+_k| - 1$.
        \item We apply the identity $|S_k| = |S^-_k| + |S^+_k|$, since all cubes with $|S^+_k\cap S^-_k| \ge 1 $ have been excluded.
        \item Simple arithmetic.
    \end{enumerate}
\end{enumerate}

\end{proof}

A direct corollary of Theorems~\ref{simplified_banzhaf} and~\ref{simplified_shapely} is that the simplified formulas for Shapley and Banzhaf values are robust w.r.t WDNF representations: equivalent WDNF will get the same Shapley and Banzhaf values.

\begin{corollary}\label{robust_DNF}
The simplified formulas are robust. For two equal WDNF formulas $\forall x : F_1(x)=F_2(x)$, even if they are represented by different cubes, $\bs(F_1)=\bs(F_2)$ and $\shapsimp(F_1)=\shapsimp(F_2)$ for any player $i$.
\end{corollary}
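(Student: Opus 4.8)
The plan is to reduce robustness to the representation-independence of the \emph{original} (exponential-time) formulas, which is immediate, and then transport that independence to the \emph{simplified} formulas via Theorems~\ref{simplified_banzhaf} and~\ref{simplified_shapely}. The crucial observation is that the original formulas $\so$ (Formula~\ref{PB_shap}) and $\bo$ (Formula~\ref{PB_banzhaf}) are written purely in terms of the evaluations $F(x_{-i}, x_i=1)$ and $F(x_{-i}, x_i=0)$; equivalently, they are functions of the characteristic function $\mathcal{M}_F$ (Def.~\ref{def:PB_characteristic_function}), which depends only on the input-output behavior of $F$ and not on how $F$ is written as a sum of cubes.

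Concretely, I would argue as follows. First, as the preprocessing step required before the simplified formulas apply, discard from $F_1$ and $F_2$ every cube with $S^+_k \cap S^-_k \neq \emptyset$; since each such cube evaluates to $0$ on all inputs, this leaves both functions pointwise unchanged, so we still have $F_1(x)=F_2(x)$ for all $x$. Next, because $F_1$ and $F_2$ agree on every assignment, each difference $F_j(x_{-i}, x_i{=}1) - F_j(x_{-i}, x_i{=}0)$ appearing in Formula~\ref{PB_shap} (resp.\ Formula~\ref{PB_banzhaf}) is identical for $j=1,2$, whence $\so(F_1)=\so(F_2)$ and $\bo(F_1)=\bo(F_2)$ for every player $i$. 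Finally, Theorem~\ref{simplified_shapely} gives $\shapsimp(F_j)=\so(F_j)$ and Theorem~\ref{simplified_banzhaf} gives $\bs(F_j)=\bo(F_j)$ for $j=1,2$; chaining these equalities yields $\shapsimp(F_1)=\so(F_1)=\so(F_2)=\shapsimp(F_2)$ and likewise $\bs(F_1)=\bs(F_2)$, which is exactly the claim.

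The only genuine subtlety—and the point I would stress in the write-up—is that the simplified formulas are defined syntactically over a particular cube decomposition, so a priori two distinct WDNF representations of the same function could produce different outputs; robustness is precisely the assertion that they do not. The argument above resolves this not by manipulating cubes directly (which would be awkward, since WDNF has no canonical form), but by routing through the semantically defined original formulas, for which representation-independence is trivial. Consequently the two theorems do all the work, and no separate combinatorial identity relating distinct cube decompositions is needed; the proof is essentially a one-line chaining of equalities once the preprocessing remark is in place.
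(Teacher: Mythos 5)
Your proof is correct and follows essentially the same route as the paper's: observe that $\so$ and $\bo$ depend only on the function's evaluations (hence are representation-independent) and chain $\bs(F_1)=\bo(F_1)=\bo(F_2)=\bs(F_2)$ and $\shapsimp(F_1)=\so(F_1)=\so(F_2)=\shapsimp(F_2)$ via Theorems~\ref{simplified_banzhaf} and~\ref{simplified_shapely}. Your explicit remark that discarding cubes with $S^+_k \cap S^-_k \neq \emptyset$ leaves the functions pointwise unchanged is a small, welcome addition that the paper leaves implicit.
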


\begin{proof}
For any player i, $\so$ and $\bo$ are robust as they are functions of assignments: they use $F$ by calling it ($F(x)$) and not by considering its cube structure. Using Theorems~\ref{simplified_banzhaf} and ~\ref{simplified_shapely}:

\[
\bs(F_1) = \bo(F_1) = \bo(F_2) = \bs(F_2)
\]
\[
\shapsimp(F_1) = \so(F_1) = \so(F_2) = \shapsimp(F_2)
\]

\end{proof}

While the proof of Corollary~\ref{robust_DNF} is straightforward, based on Theorems~\ref{simplified_banzhaf} and ~\ref{simplified_shapely}, the statement itself is not trivial, as the simplified formulas only consider the structure of the WDNF. For example, the WDNF:
\[
5(x_1) - 5(x_3) + 3(\neg x_1 \land \neg x_3) + 10(\neg x_1 \land x_3) \]\[ - 2(\neg x_1 \land \neg x_2 \land x_3)
\]
is equal to the WDNF in Tables~\ref{banzhaf_calc_example} and~\ref{shapley_calc_example}. Calculating its Banzhaf and Shapley values using the simplified formulas will return the same values as in the tables. More trivial functions over cubes, like the simple weight difference of positive versus negated literals, presented in Formula~\ref{formula_make_break}, are not robust.

\begin{equation} \label{formula_make_break}
\Delta W_i =\sum_{k=1, i \in S^+_k}^m w_k - \sum_{k=1, i \in S^-_k}^m w_k
\end{equation}
 
Next, we show that $\bs$ and $\shapsimp$ also apply to WCNF (see Def.~\ref{def:WCNF}).

\begin{theorem}\label{simplified_holds_for_cnf}
For any player $i$ and a Weighted Conjunctive Normal Form (WCNF) function $F$ the simplified formulas are equal to the original ones: $\shapsimp(F) = \so(F)$, $\bs(F) = \bo(F)$.
\end{theorem}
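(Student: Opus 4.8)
The plan is to reduce the WCNF case to the already-established WDNF case via De~Morgan duality, so that Theorems~\ref{simplified_shapely} and~\ref{simplified_banzhaf} can be invoked directly. The starting observation is that a weighted clause and a weighted cube are related by complementation: for a clause $c_k$ with positive variables $S^+_k$ and negated variables $S^-_k$, De~Morgan gives $c_k = 1 - \bar{c}_k$, where $\bar{c}_k$ is the cube $\bigl(\bigwedge_{x \in S^+_k} \neg x\bigr) \land \bigl(\bigwedge_{x \in S^-_k} x\bigr)$. Crucially, $\bar{c}_k$ is a genuine WDNF cube whose positive set is $S^-_k$ and whose negated set is $S^+_k$ — the roles of positive and negated variables are swapped relative to the clause, while the underlying variable set $S_k$ is unchanged.

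First I would dispatch the constant. The function identically equal to $1$ is a characteristic function under which every player is a null player, since $M(S\cup\{i\}) - M(S) = 0$ for all $S$; hence its original Shapley and Banzhaf values vanish. Using the linearity of $\so$ and $\bo$ (the same property underlying Theorems~\ref{simplified_shapely} and~\ref{simplified_banzhaf}), I would split $\so(w_k c_k) = \so(w_k \cdot 1) - \so(w_k \bar{c}_k) = -\so(w_k \bar{c}_k)$, and likewise $\bo(w_k c_k) = -\bo(w_k \bar{c}_k)$. Since $\bar{c}_k$ is a cube, Theorem~\ref{simplified_shapely} gives $\so(w_k \bar{c}_k) = \shapsimp(w_k \bar{c}_k)$ and Theorem~\ref{simplified_banzhaf} gives $\bo(w_k \bar{c}_k) = \bs(w_k \bar{c}_k)$, each evaluated by Formulas~\ref{shap_simplified_formula} and~\ref{banzhaf_simplified_formula} on the cube $\bar{c}_k$.

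The remaining — and only slightly delicate — step is a bookkeeping check: evaluating the simplified formula on $\bar{c}_k$ and then negating must reproduce exactly the simplified formula evaluated directly on the clause $c_k$, so that the formulas as literally written in Def.~\ref{def:WCNF}'s setting are correct. Here two sign changes cancel. Because $\bar{c}_k$ swaps the positive and negated sets, a player $i \in S^+_k$ (positive in the clause) is negated in $\bar{c}_k$, so Formula~\ref{shap_simplified_formula} assigns it $\tfrac{-w_k}{|S^+_k|\binom{|S_k|}{|S^+_k|}}$; the outer minus from $c_k = 1 - \bar{c}_k$ flips this back to $\tfrac{w_k}{|S^+_k|\binom{|S_k|}{|S^+_k|}}$, precisely the value Formula~\ref{shap_simplified_formula} prescribes for a positive literal of the clause. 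The symmetric computation for $i \in S^-_k$, the trivial case $i \notin S_k$, and the identical argument for Banzhaf using Formula~\ref{banzhaf_simplified_formula} close the proof after summing over all clauses by linearity. I expect this final sign-bookkeeping to be the main obstacle, since it is easy to mislabel which set becomes positive or negated under negation; the safeguard is to track the two independent sign sources — the literal swap versus the leading $1 - \bar{c}_k$ — separately and confirm they cancel in each of the three cases.
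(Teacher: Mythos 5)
Your proposal is correct and takes essentially the same route as the paper's proof: both rewrite each weighted clause via De~Morgan as $w_k c_k = w_k - w_k(\neg c_k)$, observe that the constant contributes nothing to Shapley or Banzhaf values, and check that the sign flip from the leading minus cancels against the swap $|S^+(c_k)| = |S^-(\neg c_k)|$, $|S^-(c_k)| = |S^+(\neg c_k)|$, recovering the simplified formulas applied directly to the clause. The only cosmetic difference is that you argue clause-by-clause via linearity of $\so$ and $\bo$, while the paper transforms the whole formula at once and appeals to robustness; the underlying computation is identical.
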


\begin{proof}
One can carefully apply the proofs of Theorems~\ref{simplified_banzhaf} and~\ref{simplified_shapely} above to WCNF, changing them slightly to consider the new form. We present another approach that utilizes the robustness property.

Given a WCNF formula, we will transform it to WDNF, run $\bs$ and $\shapsimp$ on this WDNF, and show that it is equivalent to running them directly on the WCNF. As $\bs$, $\shapsimp$, $\bo$, and $\so$ are robust, transforming the WCNF formula to an equivalent WDNF formula does not change the Shapley/Banzhaf values.

\[
w\psi = w - w(\neg\psi)
\]
For example:
\[
3(x_1 \lor \neg x_3) = 3 - 3(\neg x_1 \land x_3)
\]

Applying the transformation on $F$ we get the WDNF formula $F' = \sum_{k=1}^m w_k - w_k(\neg c_k)$. 
Calculating Shapley/Banzhaf on $F'$:
\begin{enumerate}
    \item $w_k - w_k(\neg c_k)$ has a number $w_k$ and a weighted cube. We can treat the number as a weighted cube with zero literals and observe that constant numbers have no effect on Shapley/Banzhaf values.
    \item The original weight of the clause was multiplied by $-1$.
    \item Using de-Morgan's laws:
    \[
    |S^+(c_k)| = |S^-(\neg c_k)|, \quad |S^-(c_k)| = |S^+(\neg c_k)|
    \]
\end{enumerate}
When applying 2 and 3 to $\bs$ and $\shapsimp$, both changes cancel out and we are left with the same equations.
\end{proof}

A direct corollary of Theorem~\ref{simplified_holds_for_cnf} is the robustness of $\bs$ and $\shapsimp$ also w.r.t WCNF formulas.

\begin{corollary}
\label{theorem_robust_for_WDNF_WCNF}
For any player $i$, both $\bs$ and $\shapsimp$ are robust for both WDNF and WCNF.
\end{corollary}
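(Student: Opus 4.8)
The plan is to reduce the statement to the same short chain of equalities used in the proof of Corollary~\ref{robust_DNF}, now covering the WCNF case in addition to WDNF. The WDNF half of the claim is literally Corollary~\ref{robust_DNF}, so I would simply cite it and concentrate the argument on WCNF formulas.

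The single ingredient I need to reuse is that the \emph{original} formulas $\so$ and $\bo$ are robust by construction. Inspecting Formulas~\ref{PB_shap} and~\ref{PB_banzhaf}, both are defined purely through the values $F(x)$ over assignments $x \in \{0,1\}^{h}$, and never reference how $F$ is decomposed into cubes or clauses. Consequently, whenever $F_1$ and $F_2$ agree pointwise, we get $\so(F_1)=\so(F_2)$ and $\bo(F_1)=\bo(F_2)$ immediately---this is exactly the observation already exploited in the proof of Corollary~\ref{robust_DNF}.

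With this in place, I would bridge the simplified formulas to the originals using Theorem~\ref{simplified_holds_for_cnf}, which gives $\shapsimp(F)=\so(F)$ and $\bs(F)=\bo(F)$ for every WCNF formula $F$. For two pointwise-equal WCNF formulas $F_1,F_2$ this produces the chains
\[
\bs(F_1) = \bo(F_1) = \bo(F_2) = \bs(F_2),
\]
\[
\shapsimp(F_1) = \so(F_1) = \so(F_2) = \shapsimp(F_2),
\]
where the outer equalities invoke Theorem~\ref{simplified_holds_for_cnf} and the inner equalities invoke the intrinsic robustness of $\so$ and $\bo$. Together with Corollary~\ref{robust_DNF} for WDNF, this yields robustness in both normal forms.

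There is no real obstacle: the corollary falls out immediately from Theorem~\ref{simplified_holds_for_cnf} plus the robustness of the original formulas. The only point requiring a moment's care is the intended scope---whether ``robust for WDNF and WCNF'' means robustness among representations within each fixed form, or also across the two forms. Since Theorem~\ref{simplified_holds_for_cnf} (together with Theorems~\ref{simplified_banzhaf} and~\ref{simplified_shapely}) already aligns both simplified formulas with the robust originals on each form, the cross-form reading is subsumed as well, so no additional work is needed under either interpretation.
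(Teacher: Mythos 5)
Your proposal is correct and matches the paper's own argument, which proves the corollary by the identical chain $\bs(F_1)=\bo(F_1)=\bo(F_2)=\bs(F_2)$ (and likewise for $\shapsimp$), invoking the intrinsic robustness of $\so$ and $\bo$ together with Theorem~\ref{simplified_holds_for_cnf}. You also correctly resolved the scope question: the paper's proof explicitly notes that robustness holds even when one formula is a WCNF and the other a WDNF, exactly as you argued.
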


\begin{proof}
Identical to Corollary~\ref{robust_DNF} proof. The robustness holds even if one formula is a WCNF and the other is a WDNF.
\end{proof}

To summarize all of the above: 

\begin{theorem}\label{summary}
For any player $i$, $\bs = \bo$ and $\shapsimp = \so$ for both WDNF and WCNF. These simplified formulas are robust.
\end{theorem}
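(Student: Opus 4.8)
The plan is to recognize that Theorem~\ref{summary} is purely a consolidation of results already established earlier in this appendix, so the proof reduces to assembling the four equalities and the robustness claim by citing the appropriate prior statement for each. There is no new mathematical content to produce; the task is to verify that the dependency chain among the cited results is non-circular and covers every case.

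First I would dispatch the two equality claims for WDNF. Theorem~\ref{simplified_banzhaf} gives $\bs = \bo$ on any WDNF (via linearity plus the satisfying-assignment count per cube), and Theorem~\ref{simplified_shapely} gives $\shapsimp = \so$ on any WDNF (via linearity plus the null-player-out property, reducing each cube to its single surviving assignment). Together these settle the WDNF half of the statement.

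Next I would handle WCNF by invoking Theorem~\ref{simplified_holds_for_cnf}, which already establishes both $\shapsimp = \so$ and $\bs = \bo$ on any WCNF. I would simply note in passing that this theorem itself rests on the WDNF results through the identity $w\psi = w - w(\neg\psi)$ and the robustness of $\so, \bo, \shapsimp, \bs$ — so the citation order is: Theorems~\ref{simplified_banzhaf} and~\ref{simplified_shapely} first, then Corollary~\ref{robust_DNF}, then Theorem~\ref{simplified_holds_for_cnf}. Finally, the robustness assertion in the last sentence of Theorem~\ref{summary} follows verbatim from Corollary~\ref{theorem_robust_for_WDNF_WCNF}, which covers both normal forms (and even mixed WDNF/WCNF comparisons).

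I do not expect any genuine obstacle here, since every ingredient has been proved. The only point requiring care is making the logical ordering explicit so that the proof does not appear circular: the WDNF equalities and their robustness must be invoked \emph{before} the WCNF equalities, because the WCNF argument borrows robustness from the WDNF case. Once that ordering is stated, the proof is a one-line combination of Theorems~\ref{simplified_banzhaf},~\ref{simplified_shapely},~\ref{simplified_holds_for_cnf} and Corollary~\ref{theorem_robust_for_WDNF_WCNF}.
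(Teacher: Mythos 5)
Your proposal is correct and takes essentially the same approach as the paper: Theorem~\ref{summary} is presented there as a pure consolidation (prefaced by ``To summarize all of the above'') of Theorems~\ref{simplified_banzhaf}, \ref{simplified_shapely}, and \ref{simplified_holds_for_cnf} together with Corollary~\ref{theorem_robust_for_WDNF_WCNF}, with no new argument given. Your explicit note on the non-circular ordering—WDNF equalities and Corollary~\ref{robust_DNF} before the WCNF case, since the WCNF proof borrows robustness—accurately reflects the paper's implicit dependency chain.
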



\subsection{Complexity Overview}
\label{PB_calc_complexity}

First, we demonstrate how to use $\bs$ and $\shapsimp$ to calculate the values of all the variables in the WDNF/WCNF formula in linear-time.

\begin{proposition}\label{linear_time_calculation}
\textbf{Linear Complexity}: $\shapsimp$ and $\bs$ can calculate the Shapley/Banzhaf values of all variables in linear-time with respect to the length of the WDNF/WCNF formula. In a single linear-time calculation, the Shapley/Banzhaf values of all players can be obtained.
\end{proposition}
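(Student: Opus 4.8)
The plan is to exhibit an explicit single-pass algorithm that evaluates Formulas~\ref{shap_simplified_formula} and~\ref{banzhaf_simplified_formula} for \emph{every} player simultaneously, and then to bound its running time by the formula length $\ell := \sum_{k=1}^m |S_k|$ (the total number of literal occurrences). The structural observation driving everything is that a cube $c_k$ contributes a nonzero term to $\shapsimp$ (resp.\ $\bs$) only when $i \in S_k$, and that this term depends on $i$ solely through whether $i \in S^+_k$ or $i \in S^-_k$; all remaining data---$w_k$, $|S_k|$, $|S^+_k|$, $|S^-_k|$---are cube-level quantities shared by every literal of the cube. Hence each cube can ``broadcast'' its contribution to exactly the players it mentions rather than to all $h$ players, which is what turns the apparent $O(mh)$ double loop into a single linear scan.

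Concretely, I would initialize an accumulator indexed by player to zero and loop once over the cubes. For each cube $c_k$ I first scan its literals to read off $|S^+_k|$ and $|S^-_k|$ (hence $|S_k|$) in $O(|S_k|)$ time, then compute the two cube-level coefficients---$\tfrac{w_k}{|S^+_k|\binom{|S_k|}{|S^+_k|}}$ and $\tfrac{-w_k}{|S^-_k|\binom{|S_k|}{|S^-_k|}}$ for Shapley, or $\pm\tfrac{w_k}{2^{|S_k|-1}}$ for Banzhaf---and finally add the positive coefficient to the accumulator entry of each $i \in S^+_k$ and the negative coefficient to each $i \in S^-_k$. This distribution pass is again $O(|S_k|)$, so each cube is processed in $O(|S_k|)$ and the whole loop costs $O(\sum_k |S_k|) = O(\ell)$. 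Adding $O(h)$ for initialization and output---and noting that any variable appearing in no cube is a null player, so its value is correctly left at $0$---yields total time $O(\ell + h)$, which is linear in the input size. Correctness is immediate: the accumulator for player $i$ is exactly the sum over $k$ prescribed by Formula~\ref{shap_simplified_formula} (resp.~\ref{banzhaf_simplified_formula}), just reorganized as one addition per literal occurrence.

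The step I expect to require the most care is bounding the cost of the binomial coefficients $\binom{|S_k|}{|S^+_k|}$, since a naive recomputation for each player would silently reintroduce a factor of $h$ and destroy linearity. I would precompute the factorials $0!,1!,\dots,(\max_k |S_k|)!$ in a single sweep of size $O(\max_k |S_k|) \le O(h)$ before the main loop; under the unit-cost RAM model each binomial then reduces to a constant number of table look-ups and arithmetic operations, keeping the per-cube work at $O(|S_k|)$. An equivalent alternative is to build each binomial incrementally during the first literal scan, which is already within the $O(|S_k|)$ budget and avoids a separate precomputation.

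Finally, for the WCNF case I would invoke Theorem~\ref{simplified_holds_for_cnf}: the identical accumulation procedure applies verbatim, reading each \emph{clause}'s $S^+_k$ and $S^-_k$ in place of a cube's, because the simplified formulas evaluated directly on a clause coincide with the original values. Equivalently, one may first rewrite each weighted clause $w_k c_k$ as $w_k - w_k(\neg c_k)$---a literal-count-preserving, linear-time transformation into WDNF---and then run the WDNF algorithm above; by robustness (Corollary~\ref{theorem_robust_for_WDNF_WCNF}) this returns the correct values. Either route preserves the $O(\ell)$ bound, establishing the single-pass linear-time claim uniformly for both normal forms.
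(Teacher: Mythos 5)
Your proposal is correct and takes essentially the same route as the paper's proof: a single pass over the cubes in which the cube-level coefficients are computed once per cube and then distributed only to the literals that cube contains, avoiding the naive per-variable recomputation that would cost $O(|S_k|^2)$ per cube and yielding total time linear in $\sum_k |S_k|$. Your factorial-table (or incremental) evaluation of the binomial coefficients and the explicit appeal to Theorem~\ref{simplified_holds_for_cnf} for the WCNF case are minor refinements of the same argument, not a different approach.
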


\begin{proof}
The computation of Shapley and Banzhaf values begins by initializing a dictionary that maps each variable to zero. Throughout the process, this dictionary is updated to store the current Shapley or Banzhaf value of each variable. The algorithm then traverses the formula, visiting each cube once, and incrementally updates the mapping with the partial values accumulated up to that cube. For each cube, it iterates over its variables and computes their contributions using the simplified formulas.

This process is illustrated in Tables \ref{banzhaf_calc_example} and \ref{shapley_calc_example}. Conceptually, the algorithm “fills” these tables column by column - updating only the cells corresponding to variables present in the current cube, while the remaining entries remain zero.

For a cube $c_k$, with $|S_k|$ variables, a naive implementation might require $O(|S_k|^2)$, as calculating $\frac{1}{ 2^{|S_k|-1}}$, $\frac{1}{|S^-_k|\binom{|S_k|}{|S^-_k|}}$, and $\frac{1}{|S^+_k|\binom{|S_k|}{|S^+_k|}}$ takes $O(|S_k|)$ time for each of the $|S_k|$ variables. 

However, we can optimize this by computing these fractions once per cube and reusing them. This approach requires $O(|S_k|)$ operations per cube for the fraction calculations and another $O(|S_k|)$ for the Shapley/Banzhaf value computations. Consequently, the overall complexity becomes linear with respect to the WDNF/WCNF length.
\end{proof}

Let us formally define Conjunctive Normal Form (CNF) and Disjunctive Normal Form (DNF):

\begin{definition}[Conjunctive Normal Form (CNF)]
\label{def:cnf}
    Conjunctive Normal Form (CNF) is a set of clauses, where the function returns 1 if all clauses are satisfied and 0 otherwise.
\end{definition}

\begin{definition}[Disjunctive Normal Form (DNF)]
\label{def:dnf}
    Disjunctive Normal Form (DNF) is a set of cubes, where the function returns 1 if at least one cube is satisfied and 0 otherwise.
\end{definition}

While Banzhaf and Shapley values on WCNF and WDNF can be calculated in linear time, calculating Shapley values on CNF and DNF formulas is \#P-Hard. 

Model counting is the task of counting the number of satisfying assignments; for CNF and DNF formulas, this is known to be \#P-Hard~\cite{counting_dnf_is_hard}. A polynomial reduction between Shapley values computation and the model counting problem, demonstrating that computing Shapley values on CNF and DNF formulas is \#P-Hard, is presented in~\cite{arenas2021tractabilityshapscorebasedexplanationsdeterministic}. Below, we provide a simple reduction between Banzhaf value calculation and model counting on CNF and DNF formulas, proving that Banzhaf value calculation on these formulas is also \#P-Hard.

\begin{theorem}\label{banzhaf_is_hard}
Banzhaf calculation on CNF and DNF is \#P-Hard.
\end{theorem}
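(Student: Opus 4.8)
The plan is to give a polynomial-time (Turing) reduction from model counting—known to be \#P-Hard for both CNF and DNF—to Banzhaf value computation. The key observation is that when a CNF or DNF formula $F$ over variables $\{x_1,\dots,x_n\}$ is viewed as a $\{0,1\}$-valued characteristic function via Def.~\ref{def:PB_characteristic_function}, the combinatorial Banzhaf formula (Formula~\ref{banzhaf_combinatorial_def}) for a single variable $y$ collapses into a difference of restricted model counts. Indeed, each term $M(S\cup\{y\})-M(S)$ is $1$ exactly on assignments satisfied only when $y$ is true, $-1$ on those satisfied only when $y$ is false, and $0$ otherwise, so $\beta_y(F) = \frac{1}{2^{|N|-1}}\bigl(\#\{\text{sat.\ assignments with } y{=}1\} - \#\{\text{sat.\ assignments with } y{=}0\}\bigr)$. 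Everything then hinges on engineering a fresh variable whose ``true'' and ``false'' branches let us read off the total model count.

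First I would handle the DNF case. Given a DNF $F$ over $\{x_1,\dots,x_n\}$, introduce a fresh variable $y$ and form $F' = F \lor y$, which remains a DNF since we merely add the singleton cube $\{y\}$. When $y$ is true the formula is identically true and thus has $2^{n}$ satisfying extensions over $x_1,\dots,x_n$; when $y$ is false it reduces to $F$ and contributes exactly $\#F$ satisfying assignments. Substituting into the collapsed formula over the $n+1$ players (so $2^{|N|-1}=2^{n}$) gives $\beta_y(F') = \frac{1}{2^{n}}\bigl(2^{n} - \#F\bigr)$, whence $\#F = 2^{n}\bigl(1 - \beta_y(F')\bigr)$ is recovered exactly from a single Banzhaf computation.

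The CNF case is symmetric. Given a CNF $F$, set $F' = F \land y$, adding the singleton clause $\{y\}$ and keeping the formula in CNF. Now $y$ false makes $F'$ unsatisfiable while $y$ true leaves $F$, so $\beta_y(F') = \frac{1}{2^{n}}\bigl(\#F - 0\bigr)$ and $\#F = 2^{n}\,\beta_y(F')$. Both constructions run in polynomial time, add only one variable together with one cube/clause, and let us read off the exact integer $\#F$ from the returned rational (whose denominator divides $2^{n}$). Hence an efficient oracle for Banzhaf values on DNF or CNF would solve \#P-Hard model counting, establishing the theorem.

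I expect the only delicate point to be the conceptual one already isolated above: a Banzhaf value reports a \emph{difference} of model counts rather than the count itself, so the reduction must be arranged so that one of the two branches is a known constant ($2^{n}$ in the DNF-disjunction case, $0$ in the CNF-conjunction case). Once the fresh variable is chosen to force this, the remaining arithmetic and the verification that $F'$ stays in the required normal form are routine.
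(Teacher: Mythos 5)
Your proposal is correct and matches the paper's own proof essentially step for step: both add a fresh variable as a unit cube $(x_{n+1})$ in the DNF case and a unit clause in the CNF case, then read off the model count $\#F$ from $\beta_{n+1}$ via exactly the identities $\beta = 1 - \#F/2^{n}$ (DNF) and $\beta = \#F/2^{n}$ (CNF); the paper phrases these through the expectation form of the Banzhaf value (Formula~\ref{banzhaf_expectations_def}) while you work directly from the combinatorial form (Formula~\ref{banzhaf_combinatorial_def}), which is an inessential difference. Your closing remark about forcing one branch to a known constant is precisely the mechanism the paper's construction exploits as well.
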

\begin{proof}
Given a CNF/DNF formula $\psi$ with $n$ variables, we build the formula $\psi'$ by adding a variable $x_{n+1}$ and a cube/clause $(x_{n+1})$ — a cube/clause of size 1 that includes only the positive literal $x_{n+1}$.
\begin{itemize}
    \item In CNF, when $x_{n+1}=0$, the new formula $\psi'$ is never satisfied, as $(x_{n+1})$ is false. When $x_{n+1}=1$, the new formula is satisfied if and only if the old formula is satisfied:
    
    \[
    \beta_{n+1}(\psi') = \mathbb{E}[\psi'(x)|x_{n+1}=1] - \mathbb{E}[\psi'(x)|x_{n+1}=0] =
    \]
    \[
    \mathbb{E}[\psi(x)] = \frac{1}{2^{|N|}}\sum_{x \in \{0,1\}^n} \psi(x)
    \]
    
    \item In DNF, when $x_{n+1}=0$, the new formula $\psi'$ is satisfied if and only if the old formula is satisfied. When $x_{n+1}=1$, the new formula is always satisfied, as $(x_{n+1})$ is true:
    
    \[
    \beta_{n+1}(\psi') = \mathbb{E}[\psi'(x)|x_{n+1}=1] - \mathbb{E}[\psi'(x)|x_{n+1}=0] = 
    \]
    \[
    1 - \mathbb{E}[\psi(x)] = 1 - \frac{1}{2^{|N|}}\sum_{x \in \{0,1\}^n} \psi(x)
    \]
    
\end{itemize}
The model counting of $\psi$ is $\sum_{x \in \{0,1\}^n} \psi(x)$. This can be easily extracted from $\beta_{n+1}$ in both cases above. We have shown a simple reduction between Banzhaf value computation and model counting, thus proving that Banzhaf value computation is \#P-Hard.
\end{proof}

See a summary of the complexity of Shapley and Banzhaf values computation in Table~\ref{complexity_on_dnf_cnf}.

\begin{table}[h]
\resizebox{\columnwidth}{!}{%
\begin{tabular}{l|l|l}
Formula type & Shapley Complexity & Banzhaf Complexity \\\hline
CNF & \#P-Hard & \#P-Hard \\
DNF & \#P-Hard & \#P-Hard \\
Weighted CNF & Linear time & Linear time \\
Weighted DNF & Linear time & Linear time \\
\end{tabular}}
\caption{\label{complexity_on_dnf_cnf}Shapley and Banzhaf values calculation complexity on the different Boolean formulas. }
\end{table}

\section{Shapley and Banzhaf Interaction Values on WDNF and WCNF - With Correctness Proofs}
\label{sec:interaction_values}

Shapley interaction values are defined in Def.~\ref{definition_shap_interaction_values}, and their efficient computation for WDNF/WCNF formulas is presented in Table~\ref{table_shap_iv}.
Banzhaf interaction values are defined in Def.~\ref{banzhaf_iv_def} below, and can be efficiently computed on WDNF/WCNF formulas using Formula~\ref{banzhaf_simplified_formula_interactions}.

\begin{definition}[Banzhaf Interaction Values]  
\label{banzhaf_iv_def}  
Banzhaf Interaction values examine the difference in expectations when features $i$ and $j$ are both missing or participating, versus when only one participates \cite{interaction_values}:  

\begin{multline}  
\beta_{i,j \; i \neq j}(M) = \mathbb{E} \left[M(S) \mid i,j \in S\right] + \mathbb{E} \left[M(S) \mid i,j \notin S\right] \\
- \mathbb{E} \left[M(S) \mid i \in S \land j \notin S\right] - \mathbb{E} \left[M(S) \mid i \notin S \land j \in S\right]  
\end{multline}  
\end{definition}

\paragraph{Interaction values on WCNF:} To compute Shapley and Banzhaf interaction values on WCNF, use Table~\ref{table_shap_iv} and Formula~\ref{banzhaf_simplified_formula_interactions}, and multiply their outputs by $-1$. The correctness of this approach follows from proofs analogous to those of Table~\ref{table_shap_iv} (Theorem~\ref{shapley_interaction_table_is_correct}) and Formula~\ref{banzhaf_simplified_formula_interactions} (Theorem~\ref{banzhaf_interaction_table_is_correct}).

\paragraph{Complexity of the Efficient Approach:} Mark $L$ as the number of cubes and $D$ as the number of literals in each cube. While the calculation of Shapley and Banzhaf values has linear complexity $O(LD)$, computing all interaction values requires examining all pairs of literals in each cube, resulting in a complexity of $O(LD^2)$.

\subsection{Correctness Proofs}

\begin{lemma} \label{shap_iv_linearity_proof}
[Linearity Property of Shapley Interaction Values (folklore)]  
Given a WDNF/WCNF formula $F = \sum_{k=1}^m w_k \cdot c_k$, for any players $i, j$ such that $i \neq j$, the following holds:
\[
\phi_{i,j\; i \neq j}(F) = \sum_{k=1}^m \phi_{i,j\; i \neq j}(c_k)
\]
\end{lemma}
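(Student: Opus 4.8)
The plan is to reduce everything to the ordinary linearity of the Shapley value (the linearity axiom listed among the four Shapley properties) by observing that the two conditional Shapley values appearing in Def.~\ref{definition_shap_interaction_values} are each linear functionals of the underlying WDNF/WCNF formula. Recall that $\phi_{i,j} = \phi_{j|i=1} - \phi_{j|i=0}$, where $\phi_{j|i=1}$ is the Shapley value of player $j$ in the game obtained by forcing $i$ to participate, and $\phi_{j|i=0}$ is the corresponding value when $i$ is forced to be missing. Concretely, over the reduced player set $V \setminus \{i\}$, these two games have characteristic functions $M_{i=1}(S) = \mathcal{M}_F(S \cup \{i\})$ and $M_{i=0}(S) = \mathcal{M}_F(S)$ for every $S \subseteq V \setminus \{i\}$.

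First I would establish that the characteristic function of a weighted sum of cubes is the weighted sum of the cubes' characteristic functions. This is immediate from Def.~\ref{def:PB_characteristic_function}: evaluating $F = \sum_k w_k c_k$ under the $0/1$ assignment that encodes a coalition $S$ gives $\mathcal{M}_F(S) = \sum_k w_k \mathcal{M}_{c_k}(S)$, since evaluation of a WDNF is pointwise linear in its terms.

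Next I would push this linearity through the conditioning step. Because $M_{i=1}(S) = \mathcal{M}_F(S \cup \{i\})$ is simply $\mathcal{M}_F$ evaluated on an enlarged coalition, the identity above yields $M_{i=1} = \sum_k w_k (\mathcal{M}_{c_k})_{i=1}$, and likewise $M_{i=0} = \sum_k w_k (\mathcal{M}_{c_k})_{i=0}$. Applying the linearity property of the ordinary Shapley value to each reduced game then gives $\phi_{j|i=1}(F) = \sum_k w_k \, \phi_{j|i=1}(c_k)$ and $\phi_{j|i=0}(F) = \sum_k w_k \, \phi_{j|i=0}(c_k)$. Subtracting the two and invoking Def.~\ref{definition_shap_interaction_values} produces $\phi_{i,j}(F) = \sum_k w_k \, \phi_{i,j}(c_k)$, which is the claim once each weight $w_k$ is absorbed into its cube by scalar linearity.

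The only real obstacle is the bookkeeping in the conditioning step: one must check that restricting $\mathcal{M}_F$ to $i=1$ (or to $i=0$) commutes with the sum over cubes, i.e.\ that forcing a single player's status is itself a linear operation on characteristic functions. Once this is verified, the result follows mechanically from two applications of ordinary Shapley linearity, consistent with the lemma being labeled folklore. The same argument applies verbatim to WCNF, since neither Def.~\ref{def:PB_characteristic_function} nor the Shapley axioms depend on whether the $c_k$ are cubes or clauses.
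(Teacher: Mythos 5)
Your proposal is correct and takes essentially the same route as the paper's proof: decompose $\phi_{i,j}(F)=\phi_{j|i=1}(F)-\phi_{j|i=0}(F)$ via Def.~\ref{definition_shap_interaction_values}, apply the ordinary linearity of Shapley values to each conditional term, and subtract. The only difference is that you explicitly verify the step the paper leaves implicit—that forcing player $i$'s status to $1$ or $0$ commutes with the sum over weighted cubes at the level of characteristic functions—which is a sound elaboration of the same argument rather than a different one.
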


\begin{proof}
The linearity property holds for Shapley interaction values because they are computed as a linear combination of standard Shapley values:

\[\phi_{i,j\;i \neq j}(F)=\phi_{j|i=1}(F)-\phi_{j|i=0}(F) = \]\[
\sum_{k=1}^m\phi_{j|i=1}(c_k) - \sum_{k=1}^m\phi_{j|i=0}(c_k) = \]\[
\sum_{k=1}^m [\phi_{j|i=1}(c_k)-\phi_{j|i=0}(c_k)] = \sum_{k=1}^m \phi_{i,j\;i \neq j}(c_k)\]

\end{proof}

\begin{theorem}\label{shapley_interaction_table_is_correct}
The formulas in Table~\ref{table_shap_iv} correctly calculate Shapley interaction values (as defined in Def.~\ref{definition_shap_interaction_values}) on WDNF formulas.
\end{theorem}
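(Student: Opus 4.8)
The plan is to prove the table one weighted cube at a time, reading off each conditioned Shapley value from the already-established simplified Shapley formula (Formula~\ref{shap_simplified_formula}, correct by Theorem~\ref{simplified_shapely}). First I would invoke Lemma~\ref{shap_iv_linearity_proof}: since $\phi_{i,j}(F)=\sum_{k=1}^m \phi_{i,j}(c_k)$, it suffices to verify that the table entry matches $\phi_{i,j}(w\cdot c_k)$ for a single weighted cube and a fixed pair $i\neq j$. By Def.~\ref{definition_shap_interaction_values} this equals $\phi_{j\mid i=1}(c_k)-\phi_{j\mid i=0}(c_k)$, where the conditioned values are interpreted as the Shapley value of $j$ in the PB function obtained by substituting $x_i=1$ (resp.\ $x_i=0$) into $c_k$.

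The central observation is that restricting a cube has a clean combinatorial effect. If $i\in S^+_k$, then $c_k|_{x_i=1}$ is the cube with $x_i$ deleted (so its parameters are $|S^+_k|-1,\,|S^-_k|,\,|S_k|-1$), while $c_k|_{x_i=0}\equiv\bot$ is identically zero and contributes nothing; hence $\phi_{i,j}=\phi_j(c_k|_{x_i=1})$. Symmetrically, if $i\in S^-_k$, then $c_k|_{x_i=0}$ is the cube with $x_i$ deleted (parameters $|S^+_k|,\,|S^-_k|-1,\,|S_k|-1$) while $c_k|_{x_i=1}\equiv\bot$, giving $\phi_{i,j}=-\phi_j(c_k|_{x_i=0})$. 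If $i\notin S_k$, both restrictions equal $c_k$, so the two conditioned values coincide and $\phi_{i,j}=0$; this disposes of the entire $i\notin S_k$ column at once. The $j\notin S_k$ row likewise vanishes, because $j$ remains a null player of each restricted cube.

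For the four surviving cells I would substitute the membership of $j$ into Formula~\ref{shap_simplified_formula} applied to the restricted cube. For example, with $i\in S^+_k$ and $j\in S^+_k$, the restricted cube places $j$ in its positive set of size $|S^+_k|-1$, so Formula~\ref{shap_simplified_formula} yields $\frac{w}{(|S^+_k|-1)\binom{|S_k|-1}{|S^+_k|-1}}$, exactly the table entry; the case $i\in S^+_k,\,j\in S^-_k$ gives $\frac{-w}{|S^-_k|\binom{|S_k|-1}{|S^-_k|}}$, and the two $i\in S^-_k$ cells come out as the negated restricted values $\frac{-w}{|S^+_k|\binom{|S_k|-1}{|S^+_k|}}$ and $\frac{w}{(|S^-_k|-1)\binom{|S_k|-1}{|S^-_k|-1}}$. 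Each matches its entry, completing the single-cube verification and hence, by linearity, the full statement.

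The main obstacle is not any deep argument but disciplined bookkeeping: I must establish the restriction semantics rigorously — in particular that fixing a literal to its falsifying value collapses the whole cube to the zero function (so that exactly one of $\phi_{j\mid i=1},\phi_{j\mid i=0}$ survives), and that the three quantities $|S^+_k|$, $|S^-_k|$, $|S_k|$ each decrement correctly under restriction so that the binomial coefficients of Formula~\ref{shap_simplified_formula} transform precisely into the $\binom{|S_k|-1}{\cdot}$ forms of Table~\ref{table_shap_iv}. Getting these index shifts right in all four cells is where the care is required.
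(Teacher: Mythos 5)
Your proposal is correct and takes essentially the same route as the paper's proof: both invoke Lemma~\ref{shap_iv_linearity_proof} to reduce to a single weighted cube, observe that fixing $x_i$ either collapses the cube to the identically-zero function (so one conditioned term vanishes) or deletes $x_i$ with exactly the parameter shifts $|S_k|\mapsto|S_k|-1$ and $|S^\pm_k|$ decremented as you state, and then read each surviving cell off Formula~\ref{shap_simplified_formula} applied to the restricted cube, with the null-player property disposing of the $j\notin S_k$ row. Your four computed entries match Table~\ref{table_shap_iv} and the paper's case summary in Table~\ref{table_shapley_iv_proof}, so there is no gap.
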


\begin{proof}
Given a WDNF formula $F$, from the linearity property (Lemma~\ref{shap_iv_linearity_proof}), it is enough to show that the formula holds for each cube $c_k$ separately. Let us consider all possible cases and summarize them in Table~\ref{table_shapley_iv_proof}:
\begin{enumerate}
    \item $i \notin S_k$: Setting $i$ to 0 or 1 will never change the satisfied weight, and $\phi_{j|i=1} = \phi_{j|i=0} = \phi_j$.
    \item $j \notin S_k$: By the null player property, $\phi_{j|i=1} = \phi_{j|i=0} = 0$.
    \item $i,j \in S^+_k$: When $i = 0$, the cube $c_k$ is always falsified, so $\phi_{j|i=0} = 0$ (the Shapley values of the variables participating in a constant function are always zero, as the variables have no effect on the function output. We used this fact in the proof of Theorem~\ref{simplified_holds_for_cnf}). When we set $i = 1$, we are left with a cube $c_k'$ such that $|S(c_k')| = |S(c_k)| - 1$, $|S^+(c_k')| = |S^+(c_k)| - 1$, and $|S^-(c_k')| = |S^-(c_k)|$. Let $V = \{x_1, x_2, \ldots, x_h\}$ be the Boolean variables of $F$ (see Def.~\ref{def:WDNF}). Using Def.~\ref{definition_shap_interaction_values} and assignment in Formula~\ref{shap_simplified_formula}, we get: 
    
    \[
    \phi_{i,j\;i \neq j}(c_k) = \phi_j(c_k(V|x_i=1)) - \phi_j(c_k(V|x_i=0)) = 
    \]\[
    \frac{w_k}{(|S^+(c_k)|-1)\binom{|S(c_k)| - 1}{|S^+(c_k)|-1}} - 0
    \]
    \item $i,j \in S^-_k$: Symmetrically, when $i = 1$, the cube is always falsified, and $\phi_{j|i=1} = 0$. When we set $i = 0$, we are left with a cube $c_k'$ such that $|S(c_k')| = |S(c_k)| - 1$, $|S^+(c_k')| = |S^+(c_k)|$, and $|S^-(c_k')| = |S^-(c_k)| - 1$. Using Def.~\ref{definition_shap_interaction_values} and assignment in Formula~\ref{shap_simplified_formula}, we get:
    
    \[
    \phi_{i,j\;i \neq j}(c_k) = 0 - \frac{-w_k}{(|S^-(c_k)|-1)\binom{|S(c_k)| -1}{|S^-(c_k)|-1}}
    \]
    \item $i \in S^+_k$ and $j \in S^-_k$: We know $\phi_{j|i=0} = 0$, and assigning $i = 1$ results in $c_k'$ such that $|S(c_k')| = |S(c_k)| - 1$ and $|S^-(c_k')| = |S^-(c_k)|$: 
    
    \[
    \phi_{i,j\;i \neq j}(c_k) = \frac{-w_k}{(|S^-(c_k)|)\binom{|S(c_k)| - 1}{|S^-(c_k)|}} - 0
    \]
    \item $i \in S^-_k$ and $j \in S^+_k$: Symmetrically: 
    
    \[
    \phi_{i,j\;i \neq j}(c_k) = 0 - \frac{w_k}{(|S^+(c_k)|)\binom{|S(c_k)| - 1}{|S^+(c_k)|}}
    \]
\end{enumerate}

We have shown the formulas hold for all cases.  Table~\ref{table_shapley_iv_proof} summarizes all cases.
\end{proof}

\begin{table}[h!]
\centering
\renewcommand{\arraystretch}{1.5}
\caption{\label{table_shapley_iv_proof}Summarize the Shapley interaction values proof. For each case we calculate both $\phi_{j|i=1}$ and  $\phi_{j|i=0}$ to find the interaction value. We denote $S=S(c_k) = S_k$, $S^+=S^+(c_k)=S^+_k$, $S^-=S^-(c_k)=S^-_k$, $w=w_k$.}
\resizebox{\columnwidth}{!}{
\begin{tabular}{l|l|l|l}
Case & $\phi_{j|i=1}$ & $\phi_{j|i=0}$ & $\phi_{i,j\;i \neq j}$ \\\hline
$(i \in S^+)\land (j \in S^-)$ & $\frac{-w}{(|S^-|)\binom{|S| - 1}{|S^-|}}$ & 0 & $\frac{-w}{(|S^-|)\binom{|S| - 1}{|S^-|}}$ \\
$(i \in S^-)\land (j \in S^+)$ & 0 & $\frac{w}{(|S^+|)\binom{|S| - 1}{|S^+|}}$ & $\frac{-w}{(|S^+|)\binom{|S| - 1}{|S^+|}}$ \\
$i,j \in S^+$ & $\frac{w}{(|S^+|-1)\binom{|S| - 1}{|S^+|-1}}$ & 0 & $\frac{w}{(|S^+|-1)\binom{|S| - 1}{|S^+|-1}}$ \\
$i,j \in S^-$ & 0 & $\frac{-w}{(|S^-|-1)\binom{|S| - 1}{|S^-|-1}}$ & $\frac{w}{(|S^-|-1)\binom{|S| - 1}{|S^-|-1}}$ \\
$(i \notin S)\land (j \in S^+)$ & $\frac{w}{(|S^+|)\binom{|S|}{|S^+|}}$ & $\frac{w}{(|S^+|)\binom{|S|}{|S^+|}}$ & 0 \\
$(i \notin S)\land (j \in S^-)$ & $\frac{-w}{(|S^-|)\binom{|S|}{|S^-|}}$ & $\frac{-w}{(|S^-|)\binom{|S|}{|S^-|}}$ & 0 \\
$j \notin S$ & 0 & 0 & 0 \\
\end{tabular}
}
\end{table}

\begin{lemma}
\label{banzhaf_iv_def_2}
Banzhaf interaction values can also be defined as follows:
\begin{equation}
    \beta_{i,j\;i \neq j} = \beta_{j|i=1} - \beta_{j|i=0}
\end{equation}
\end{lemma}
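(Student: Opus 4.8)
The plan is to expand both $\beta_{j|i=1}$ and $\beta_{j|i=0}$ using the expectation characterization of the Banzhaf value (Formula~\ref{banzhaf_expectations_def}), subtract them, and check that the result coincides term-for-term with the right-hand side of Def.~\ref{banzhaf_iv_def}. First I would fix the meaning of the notation: $\beta_{j|i=1}$ denotes the Banzhaf value of player $j$ in the reduced game obtained by forcing $i$ to always participate, and symmetrically $\beta_{j|i=0}$ forces $i$ to be missing. In each reduced game the remaining players (all of $N$ except $i$, which still includes $j$) range uniformly over their subsets, so applying Formula~\ref{banzhaf_expectations_def} to these reduced games yields
\begin{align*}
\beta_{j|i=1} &= \mathbb{E}[M(S)\mid i\in S,\, j\in S] - \mathbb{E}[M(S)\mid i\in S,\, j\notin S],\\
\beta_{j|i=0} &= \mathbb{E}[M(S)\mid i\notin S,\, j\in S] - \mathbb{E}[M(S)\mid i\notin S,\, j\notin S].
\end{align*}

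Next I would justify the identification that each expectation in the reduced game equals the corresponding conditional expectation in the full game. This holds because, under the uniform (product) distribution on $\{0,1\}^{|N|}$, the membership indicators of distinct players are independent; conditioning on $i$'s status therefore leaves the remaining players uniformly distributed and otherwise unaffected, so the uniform distribution underlying the reduced-game Banzhaf value agrees with conditioning the full-game uniform distribution on $i$. With this in hand, subtracting the two displayed lines and regrouping gives
\begin{align*}
\beta_{j|i=1} - \beta_{j|i=0}
&= \mathbb{E}[M(S)\mid i,j\in S] + \mathbb{E}[M(S)\mid i,j\notin S]\\
&\quad - \mathbb{E}[M(S)\mid i\in S,\, j\notin S] - \mathbb{E}[M(S)\mid i\notin S,\, j\in S],
\end{align*}
which is exactly the expression defining $\beta_{i,j\;i\neq j}$ in Def.~\ref{banzhaf_iv_def}.

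The hard part will not be computational—the final step is a one-line rearrangement of four expectations—but conceptual: one must state precisely what ``Banzhaf value of $j$ when $i$ is fixed'' means and verify the conditioning-consistency step, namely that forcing $i$ and then averaging uniformly over the free players reproduces the conditional expectations appearing in Def.~\ref{banzhaf_iv_def}. Once that equivalence is pinned down, the two definitions match by inspection.
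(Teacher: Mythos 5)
Your proposal is correct and follows essentially the same route as the paper: both rewrite the four conditional expectations of Def.~\ref{banzhaf_iv_def} as two differences matching Formula~\ref{banzhaf_expectations_def} applied with $i$ forced to participate or to be missing, the only difference being the direction of the rearrangement. Your explicit justification of the conditioning-consistency step (that under the uniform product distribution, fixing $i$'s status leaves the remaining players uniform, so the reduced-game expectations equal the conditional expectations) is a welcome elaboration of something the paper leaves implicit in its notation $\beta_{j|i=1}$, but it does not change the substance of the argument.
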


\begin{proof} 
Combining Def.~\ref{banzhaf_iv_def} with Formula~\ref{banzhaf_expectations_def}:
\begin{multline*}
\beta_{i,j\;i \neq j}(M) = \\ \mathbb{E} [M(S)|i,j \in S] + \mathbb{E} [M(S)|i,j \notin S] \\
- \mathbb{E} [M(S)|i \in S \land j \notin S] - \mathbb{E} [M(S)|i \notin S \land j \in S] = \\
(\mathbb{E} [M(S)|j \in S \land i \in S] - \mathbb{E} [M(S)|j \notin S \land i \in S]) - \\
(\mathbb{E} [M(S)|j \in S \land i \notin S] - \mathbb{E} [M(S)|j \notin S \land i \notin S]) = \\
\beta_{j|i=1}(M) - \beta_{j|i=0}(M)
\end{multline*}
\end{proof}

\begin{theorem}\label{banzhaf_interaction_table_is_correct}
Formula~\ref{banzhaf_simplified_formula_interactions} correctly calculates Banzhaf interaction values on WDNF formulas.
\end{theorem}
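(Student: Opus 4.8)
The plan is to mirror the proof of Theorem~\ref{shapley_interaction_table_is_correct}, replacing the Shapley machinery with its simpler Banzhaf analogue. First I would record a linearity property for Banzhaf interaction values: by Lemma~\ref{banzhaf_iv_def_2} we have $\beta_{i,j} = \beta_{j|i=1} - \beta_{j|i=0}$, and since each conditional Banzhaf value is itself linear in the characteristic function, the interaction value decomposes as $\beta_{i,j}(F) = \sum_{k=1}^m \beta_{i,j}(w_k c_k)$. This reduces the theorem to verifying Formula~\ref{banzhaf_simplified_formula_interactions} on a single weighted cube $c_k$, exactly as the Shapley proof reduces to a per-cube check.

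The observation that drives the cube-level computation is how fixing $x_i$ acts on a cube. If $i \in S^+_k$, then setting $x_i = 1$ satisfies the literal $x_i$ and leaves a reduced cube $c'_k$ over the variables $S_k \setminus \{i\}$ with $|S(c'_k)| = |S_k|-1$, while setting $x_i = 0$ falsifies $c_k$ entirely, turning it into the constant $0$; the situation is symmetric when $i \in S^-_k$. Crucially, the proof of Theorem~\ref{simplified_banzhaf} shows that the simplified Banzhaf value of a cube depends only on $|S|$ and the sign of the literal, not on the ambient number of players $h$, so I can apply Formula~\ref{banzhaf_simplified_formula} directly to $c'_k$: its value for $j$ is $\pm\, w_k / 2^{|S_k|-2}$, the sign determined by whether $j \in S^+(c'_k)$ or $j \in S^-(c'_k)$, and a constant cube contributes $0$ by the null-player property.

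With this in hand I would run through the cases on the positions of $i$ and $j$, each collapsing to a one-line evaluation of $\beta_{j|i=1} - \beta_{j|i=0}$. When $i \notin S_k$, fixing $x_i$ leaves $c_k$ unchanged, so $\beta_{j|i=1} = \beta_{j|i=0}$ and the interaction vanishes; when $j \notin S_k$, the null-player property forces both conditional values to $0$. In each of the four remaining cases exactly one of $\beta_{j|i=1}, \beta_{j|i=0}$ is $0$ (the branch that falsifies the cube) and the other equals $\pm\, w_k / 2^{|S_k|-2}$ from the reduced cube, reproducing the four nonzero entries of Formula~\ref{banzhaf_simplified_formula_interactions}: $+w_k/2^{|S_k|-2}$ when $i$ and $j$ share a sign and $-w_k/2^{|S_k|-2}$ when their signs differ.

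The step I expect to be the main obstacle—though it is more bookkeeping than difficulty—is the careful handling of the reduced cube: verifying that fixing $x_i$ yields a genuine (satisfiable) cube, which relies on the fact that all cubes with $S^+_k \cap S^-_k \neq \emptyset$ have already been discarded, that its size drops by exactly one, and that $j$'s membership in $S^+$ or $S^-$ is preserved under the reduction. Once the $h$-independence of the simplified formula is invoked to justify evaluating Formula~\ref{banzhaf_simplified_formula} on the smaller cube, the arithmetic is immediate, and the WCNF version follows from the multiply-by-$(-1)$ argument already used for the other interaction formulas.
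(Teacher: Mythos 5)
Your proposal is correct and follows essentially the same route as the paper's proof: the paper likewise establishes linearity of Banzhaf interaction values, invokes Lemma~\ref{banzhaf_iv_def_2} to write $\beta_{i,j}=\beta_{j|i=1}-\beta_{j|i=0}$, and then reuses the per-cube case analysis from the Shapley interaction proof (Theorem~\ref{shapley_interaction_table_is_correct}), with the cases summarized in Table~\ref{table_banzhaf_iv_proof}. Your explicit justification of the reduced-cube step via the $h$-independence of Formula~\ref{banzhaf_simplified_formula} simply spells out details the paper leaves implicit.
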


\begin{proof}
The linearity property holds for the Banzhaf interaction values due to the linearity of expectations and Definition~\ref{banzhaf_iv_def}. Utilizing the definition presented in Lemma~\ref{banzhaf_iv_def_2}, we can apply the proof structure of Shapley interaction values here and derive Formula~\ref{banzhaf_simplified_formula_interactions}. Table~\ref{table_banzhaf_iv_proof} summarizes the cases in the proof. 
\end{proof}

\begin{table}[h]
\centering
\renewcommand{\arraystretch}{1.5}
\caption{\label{table_banzhaf_iv_proof}Summary of the Banzhaf interaction values proof. For each case, we calculate both $\beta_{j|i=1}$ and $\beta_{j|i=0}$ to find the interaction value. We denote $S=S(c_k) = S_k$, $S^+=S^+(c_k)=S^+_k$, $S^-=S^-(c_k)=S^-_k$, $w=w_k$.}
\begin{tabular}{l|l|l|l}
Case & $\beta_{j|i=1}$ & $\beta_{j|i=0}$ & $\beta_{i,j\;i \neq j}$ \\\hline
$(i \in S^+)\land (j \in S^-)$ & $\frac{-w}{2^{|S|-2}}$ & 0 & $\frac{-w}{2^{|S|-2}}$ \\
$(i \in S^-)\land (j \in S^+)$ & 0 & $\frac{w}{2^{|S|-2}}$ & $\frac{-w}{2^{|S|-2}}$ \\
$i,j \in S^+$ & $\frac{w}{2^{|S|-2}}$ & 0 & $\frac{w}{2^{|S|-2}}$ \\
$i,j \in S^-$ & 0 & $\frac{-w}{2^{|S|-2}}$ & $\frac{w}{2^{|S|-2}}$ \\
$(i \notin S)\land (j \in S^+)$ & $\frac{w}{2^{|S|-1}}$ & $\frac{w}{2^{|S|-1}}$ & 0 \\
$(i \notin S)\land (j \in S^-)$ & $\frac{-w}{2^{|S|-1}}$ & $\frac{-w}{2^{|S|-1}}$ & 0 \\
$j \notin S$ & 0 & 0 & 0 \\
\end{tabular}
\end{table}

\section{Complexity Analysis}
\label{sec:complexity_analysis}

This section presents a complexity analysis of our algorithms. We begin by analyzing the complexity of \emph{MapPatternsToCube} and \emph{CaclDecisionPatterns}, followed by an analysis of our main algorithm \algname. The input for our algorithms is:
\begin{itemize}
    \item A decision tree ensemble with $T$ trees, each tree with depth $D$, $K$ nodes and $L$ leaves. In the complexity analysis we assume the trees are more or less balanced, i.e. $O(K) = O(2^D) = O(L)$.
    \item Consumer data $C$ with $n$ rows
    \item Background data $B$ with $m$ rows
    \item A function $v$ that takes a cube and returns a mapping from feature subsets (of size one or more) to real numbers. This function can compute Shapley/Banzhaf values, interaction values and any other metric that satisfies the linearity property.
\end{itemize}

\paragraph{\textit{CalcDecisionPatterns} complexity:} The algorithm's input is a single decision tree and the consumer data $C$. The algorithm performs a Breadth-First Search (BFS), applying the split function of each node to all consumers. This results in a time complexity of $O(nK) = O(nL)$. The output size is also $O(nL)$, indicating that the algorithm is asymptotically optimal.

\paragraph{\textit{MapPatternsToCube} complexity:} The algorithm's input is the list of features along a root-to-leaf path, which has length $D$. The algorithm iterates over this list, tripling the number of entries in the dictionary $d$ at each step. As a result, both the runtime and output size are $O(3^D)$, indicating that the algorithm is asymptotically optimal.

\paragraph{Function $v$ complexity:}  \algname complexity depends on the function $v$. In this section we use amortized complexity analysis and consider the average-case complexity and output size of $v$ over all possible cubes defined on $\{x_1, \dots , x_D\}$. 

The average case complexity for cubes of length up to $D$ is defined as the total worst-case complexity of running $v$ on all possible cubes over $\{x_1, \dots , x_D\}$ (or any subset of these variables), divided by the number of cubes, $3^D$. Similarly the average case output size is the total worst-case output size on these cubes divided by $3^D$. Here, the $\{x_1, \dots , x_D\}$ variables represents features along a root-to-leaf path. We consider the worst-case where all paths in the tree include unique features (no path with repeating features). We denote the average output size by $O(v_o^{avg})$ and the average complexity by $O(v_c^{avg})$. Running $v$ on all cubes then costs $O(3^D v_c^{avg})$ and produces $O(3^D v_o^{avg})$ outputs. Clearly, $O(v_o^{avg}) \le O(v_c^{avg})$.

For Shapley/Banzhaf values, $O(v_c^{avg}) = O(v_o^{avg}) = O(D)$, as for each cube the function $v$ computes and returns one value per each variable. For Shapley/Banzhaf interaction values, $O(v_c^{avg}) = O(v_o^{avg}) = O(D^2)$, as for each cube the function $v$ computes and returns one value per each pair of variables. 

For both Shapley/Banzhaf values and their interaction values, the average-case complexity and output size are asymptotically the same as the complexity and output size on the worst case cube/clause. Thus, using average-case instead of worst-case measures does not change the overall complexity for these values. We still provide the tightest bounds we can, hoping they will be useful for future research.  

Another property of $v$ needed in the analysis is $O(v_s)$, the number of unique feature subsets returned by all possible cubes over $\{x_1, \dots , x_D\}$. For Shapley/Banzhaf values $O(v_s) = D$, since $v$ returns all subsets of size one. For Shapley/Banzhaf interaction values $O(v_s) = D^2$, since $v$ returns all subsets of size two.

\paragraph{\algname complexity:} The algorithm has several steps: computing $f$, $M$, $s$, and finally the Shapley/Banzhaf values. We analyze the complexity of each step individually.

\begin{enumerate}
    \item \textbf{Computing $\mathbf{f}$}: 
    
    \underline{In Background}:
    Calling \emph{CalcDecisionPatterns(T,B)} takes $O(mL)$ per tree and returns an output of size $O(mL)$. The \textit{value\_counts} function and \textit{bincount} function, a more efficient Numpy alternative we used in practice, processes this output in linear time, resulting in an overall complexity of $O(mTL)$.
    
    \underline{In Path-Dependent}:
    Formula~\ref{eq:path_dependent_estimation} can be computed for all leaves in a single traversal of the tree, requiring $O(L2^D)=O(L^2)$ time per tree and $O(TL^2)$ overall.
    
    \item \textbf{Computing $\mathbf{M}$}:  
    For each leaf in each tree we iterate over all $O(3^D)$ cubes generated by \emph{MapPatternsToCube}, apply $v$ to each cube, and iterate over its output. These cubes are all the possible cubes over the $D$ variables that represents the features along the root-to-leaf path. Thus, the complexity of the step is $O(TL3^D v_c^{avg})$. Note: for this analysis to be correct we assume $1 \le v_c^{avg}$, even if most cubes are skipped the average complexity can not be less than $O(1)$.
    
    This step constructs $O(TL v_s)$ matrices, and these matrices together contain $O(TL3^D v_o^{avg})$ non-zero entries. Each matrix is sparse: its size is $4^D$ (since both $p_c$ and $p_b$ can range between $0$ and $2^D$), but it holds at most $3^D$ non-zero entries.

    \item \textbf{Computing $\mathbf{s}$}: 
    The bottleneck here is the matrix-vector multiplication $\mathbf{M_{l,i}} \cdot \mathbf{f_l}$. By performing sparse matrix multiplications, we reduce the complexity of the matrix-vector multiplication from $O(4^D)$ to $O(3^D)$.
    
    This step constructs $O(TLv_s)$ vectors, one for each matrix. Sparse matrix-vector and matrix-scalar multiplications run in time proportional to the number of non-zero entries. Therefore, this step has a complexity of $O(TL3^Dv_o^{avg})$.

    \item \textbf{Computing the final values}:
    Computing $P_c$ requires $O(nTL)$ time.
    The final Shapley/Banzhaf values or interaction values computed are via NumPy indexing over all $s$ vectors, each requiring $O(n)$ time. This yields an overall complexity of $O(nTLv_s)$. 
    
\end{enumerate}

By summing the complexities above, we obtain the total runtime of \algname for each case, as summarized in Table~\ref{full_complexity_table}.

\begin{table}[h!]
\resizebox{\columnwidth}{!}{%
\renewcommand{\arraystretch}{1.5}
\begin{tabular}{l|l}
Task & \algname Complexity \\\hline
Path Dependent on function $v$ & $O(nTLv_s + TL3^Dv_c^{avg})$ \\
Background on function $v$ & $O(mTL + nTLv_s + TL3^Dv_c^{avg})$ \\
Path Dependent SHAP & $O(nTLD + TL3^DD)$ \\
Path Dependent Banzhaf & $O(nTLD + TL3^DD)$ \\
Background SHAP & $O(mTL + nTLD + TL3^DD)$ \\
Background Banzhaf & $O(mTL + nTLD + TL3^DD)$ \\
Path Dependent SHAP IV &  $O(nTLD^2 + TL3^DD^2)$ \\
Path Dependent Banzhaf IV &  $O(nTLD^2 + TL3^DD^2)$ \\
Background SHAP IV & $O(mTL + nTLD^2 + TL3^DD^2)$ \\
Background Banzhaf IV & $O(mTL + nTLD^2 + TL3^DD^2)$ \\
\end{tabular}}
\caption{\label{full_complexity_table} \algname complexity. SHAP/Banzhaf IV refers to the task of calculating all Shapley/Banzhaf interaction values. Legend: $n$ = consumer data size, $m$ = background data size, $T$ = number of trees, $L$ = number of leaves per tree, $D$ = tree depth, $O(v_c^{avg})$ is function $v$ average complexity on all the cubes generated by a single root-to-leaf path and $O(v_s)$ is the number of unique feature subsets that $v$ returns from these cubes.}
\end{table}

\section{Expanded Experimental Section}
\label{sec:experimental_results_appendix}

This section expands on the experimental results presented in Sec.~\ref{sec:experimental_results} and provides additional details on the empirical correctness verification employed in our study.

\subsection{Additional Experimental Details}

\begin{enumerate}
    \item \textbf{Estimated Runtime}: the \shap Python package supports Baseline and Path-Dependent SHAP, whereas Background SHAP is limited to 100 background samples~\cite{shap_issue_background_100}, and Background Shapley interaction values are not supported~\cite{shap_issue_background_shap_iv}. Therefore, the running times of these Background SHAP are estimated. The running times of Shapley interaction values are also estimated due to RAM limitations. See the estimation methodology in Appendix.~\ref{sec:estimation}.
    \item \textbf{Modeling Framework}: Several state-of-the-art SHAP packages do not support XGBoost. To enable comparison on similar models, we used known alternatives: PLTreeShap~\cite{linear_background_shap} was run on a LightGBM model, and FastTreeSHAP~\cite{fast_tree_shap} on a scikit-learn RandomForest. All models consisted of 100 trees with a maximum depth of 6.
    \item \textbf{GPU SOTA}: The state-of-the-art (SOTA) GPU algorithm is GPUTreeSHAP. It is available in the \shap package, but requires cloning the repository and installing it locally~\cite{shap_gputree_doc}. The Path-Dependent variant of the same algorithm is also integrated into the XGBoost Python package and can be used without additional setup. Although both implementations are based on the same algorithm, the XGBoost version is significantly faster in practice. In our experiments, we used the XGBoost implementation as the SOTA for Path-Dependent SHAP and the \shap package as the SOTA for Background SHAP.
\end{enumerate}

See the results and performance comparison in Sect.~\ref{sec:experimental_results}.

\subsection{SOTA Running Time Estimation}
\label{sec:estimation}
Path-Dependent SHAP times represent actual measurements. We run the \shap Python package Background SHAP on both CPU and GPU using a background dataset of size 100 (the largest size possible~\cite{shap_issue_background_100}) and the full consumer dataset. Even with this small background dataset, the \shap took 3 minutes on IEEE-CIS and 90 minutes on KDD—already significantly slower than \algname, which processed the entire background dataset in just 12 seconds on IEEE-CIS and 162 seconds on KDD. To estimate \shap running time on the entire background dataset we scaled the sample running time by $0.01 \times Actual\_Background\_Size$. Additional experiments with background sizes of 10, 20, and 50 confirmed that \shap running time scales linearly with background size. 

Due to RAM limitations we estimate the running time of Shapley interaction values tasks using the same technic. See table~\ref{table_estimations} for estimation details.

\begin{table}[h!]

\resizebox{\columnwidth}{!}{%
\renewcommand{\arraystretch}{1.5}
\begin{tabular}{l|l|l|l|l|l}
Task & Framework & Dataset & $|B|$ & $|C|$ & Runtime \\\hline
BG SHAP & \shap & IEEE-CIS & 100 & all & 177 sec \\
BG SHAP & \shap & KDD & 100 & all & 5428 sec \\
BG SHAP & \shap (GPU) & IEEE-CIS &  100 & all & 11 sec \\
BG SHAP & \shap (GPU) & KDD & 100 & all & 160 sec \\
PD SHAP IV & \shap & IEEE-CIS & - & 100 & 101 sec \\
PD SHAP IV & \shap & KDD & - & 100 & 24 sec \\
PD SHAP IV & \shap (GPU) & IEEE-CIS & - & 10000 & 9 sec \\
PD SHAP IV & \shap (GPU) & KDD & - & 10000 & 0.8 sec \\
PD SHAP IV & FastTreeShap & KDD & - & 100000 & 445 sec \\
BG SHAP IV & PLTreeSHAP & IEEE-CIS & all & 10000 & 70 sec \\
BG SHAP IV & PLTreeSHAP & KDD & all & 10000 & 195 sec \\
\end{tabular}}
\caption{\label{table_estimations}Estimation approach for all estimated tasks. Legend: $|B|$: background data size, $|C|$ consumer data size, BG: Background, PD: Path Dependent (on Path Dependent SHAP $B$ is not used and thus $|B|=0$), \shap is the shap Python package. The "Runtime" column shows the time taken on the sampled data, before extrapolation. }
\end{table}

\subsection{Empirical Correctness Verification}

To validate the correctness of \algname, we compared the Shapley values and interaction values it computed with those generated by the \shap Python package. This comparison was carried out on the first \num{1000} consumers of the IEEE-CIS dataset, using the first \num{80} rows from the corresponding background dataset. The results showed strong agreement, with all values differing by at most 0.00001. Similar consistency was observed with Path-Dependent SHAP and Path-Dependent SHAP IV.

To validate the correctness of \algname's Banzhaf values, we compared its output on a small synthetic dataset to a direct exponential-time implementation. On 100 consumers and background size 3, all values differed by at most 0.00001.

\end{document}